\newtheorem{definition}{Definition}
\newtheorem{theorem}{Theorem}
\newtheorem{proposition}{Proposition}
\newtheorem{lemma}{Lemma}
\newtheorem{corollary}{Corollary}
\newtheorem{assumption}{Assumption}
\title{\LARGE \bf
Correct-by-Construction Navigation Functions\\ with Application to Sensor Based Robot Navigation
}
\author{Savvas G. Loizou$^{1}$ and Elon D. Rimon$^{2}$
\thanks{$^{1}$Dept. of Mechanical and Materials Science~\mbox{Engineering,}
        Cyprus University of Technology, Limassol, Cyprus. {\tt\small  savvas.loizou@cut.ac.cy}.}%
\thanks{$^{2}$Dept. of Mechanical Engineering, Technion Israel Institute of Technology, Haifa, Israel. {\tt\small rimon@technion.ac.il}.}%
}
\begin{document}

\maketitle
\thispagestyle{empty}
\pagestyle{empty}

\begin{abstract}
This paper brings together the concepts of navigation transformation and harmonic functions to form navigation functions that are correct-by-construction in the sense that no tuning is required.  The form of the navigation function is explicitly related to the number of obstacles in the environment. This enables application of navigation functions for autonomous robot navigation in partially or fully unknown environments, with the capability of on-the-fly adjustment of the navigation function when new obstacles are discovered by the robot. Appropriate navigation controllers, applicable to robots with local, sector bounded sensing, are presented and analyzed for a~kinematic point-mass robot and then for the dynamic point-mass robot system. The closed form nature of the proposed navigation scheme provides for online, fast-feedback based navigation. In addition to the analytic guarantees, simulation studies are presented to verify the effectiveness of the methodology.
\end{abstract}

\section{Introduction}

\noindent Navigation Functions \cite{KodRimNF90} is an active research topic in the field of robotic navigation. There are numerous successful examples in the literature of Navigation Function applications to navigation problems of increasing complexity \cite{Dimarogonas_Dual_12}. One of the major challenges when setting up a Navigation Functions has to do with its tuning. A construction can be a candidate Navigation Function, but to actually have the Navigation Function properties, an appropriate selection of the $k$ parameter has to be made. Such a $k$ is always guaranteed to exist, however calculation of its lower bound is not straightforward. Several solutions have appeared in the literature to tackle the issue of sensor based reactive planning, see e.g. \cite{arslankoditschek19}, \cite{Vasilopoulos20} however setting up a correct-by-construction Navigation Function on an arbitrary workspace is still an open issue.

Recent results on the Navigation Transformation \cite{NavTranTRO17} and Harmonic Function based Navigation Functions \cite{LoizouCDC11} indicated the feasibility of a tuning controller. However the analysis provided in the current work demonstrates that a tuning controller is not required and that for Harmonic Function based Navigation Functions utilizing the Navigation Transformation, parameter $k$, when explicitly provided as a function of the number of workspace obstacles is sufficient to generate a correct behavior. Kinematic and dynamic controllers were produced to handle environments with only partial workspace knowledge and robot sensing limited in a bounded sensing sector. The contributions of the current work are summarized in the list below:
\begin{itemize}
    \item A correct-by-construction Navigation Function
    \item On-the-fly addition of obstacles in the workspace
    \item A relaxed version of the Morse Property of the Navigation Functions that provides equivalent performance characteristics
    \item Navigation capability with arbitrary, non-zero, sector-bounded local sensing 
    \item Kinematic controller whose vector field does not vanish at the vicinity of saddle points
    \item Dynamic controller with critical damping and bounded maximum velocity
\end{itemize}

The rest of the paper is organized as follows: Section \ref{sec:prelims} presents preliminary notions, sections \ref{sec:HFcontruction} and \ref{sec:NFanalysis} present the construction and analysis of the navigation function, section \ref{sec:CtrlDesign} presents the controller design, section \ref{sec:sims} presents simulation results and section \ref{sec:conclude} concludes the paper.

\section{Preliminaries}
\label{sec:prelims}

\noindent This section introduces navigation functions terminology, the two mobile robot types, the robot sensor model, then the navigation function and controller design problem.

{\bf Basic terminology:} If $K$ is a set, then  
The $n$-dimensional \emph{sphere world} as defined in \cite{KodRimNF90}
is a~compact and connected set in $\mathbb{R}^n$ that is bounded by an~outer  $(n-1)$-dimensional sphere and contains $M$ disjoint internal spheres that represent obstacles. It serves as a~topological model for robot workspaces defined below. 

Given a~smooth function $f(\cdot):\mathbb{R}^n\rightarrow \mathbb{R}^n$, we denote the Jacobian matrix of this function as $J_f(\cdot)$ and the Jacobian determinant by $\left\|J_f(\cdot)\right\|$. Given a~smooth function $\phi:\mathbb{R}^n\rightarrow \mathbb{R}$ we denote the Hessian matrix of $\phi$ as $\mathcal{H}_\phi\left(\cdot\right)$. 


{\bf Robot workspaces:} The navigation functions will be constructed in the following point worlds~\cite{NavTranTRO17}.


\begin{definition}
	Let $P_i \in \mathbb{R}^n$ for $i = 1 \ldots M$ be $M$ discrete point obstacles in $\mathbb{R}^n$. Then a \emph{point world}
	is defined as the set $\mathcal{P} = \mathbb{R}^n - \{ P_1,\ldots,P_M\}$. 
\end{definition}


We will restrict our attention to robot workspaces that are topologically equivalent to sphere worlds as follows.

\begin{definition}
	The \emph{robot workspace} $\mathcal{W}$ is a~compact and connected set in $\mathbb{R}^n$ topologically equivalent to a~sphere world. 
\end{definition}

A valid robot workspace has an~outer boundary denoted ${\cal O}_0$ and disjoint internal obstacles denoted ${\cal O}_i$ for $i=1\ldots M$. 
By the nature of the underlying sphere world, the \emph{interior} of a~robot workspace is diffeomorphic to a~point world~$\mathcal{P}$.  
The mapping of complex robot workspaces to their point-world models will use the following coordinate transformation.

\begin{definition}[\cite{NavTranTRO17}]
	\label{def:NavTran}
	A \emph{navigation transformation} is a~smooth coordinate transformation (a~diffeomorphism) $\Phi:\stackrel{\circ}{\mathcal{W}}\rightarrow \mathcal{P}$, that maps the interior of the robot workspace to a point world. 
\end{definition}


{\bf Robot models:} This paper considers two types of mobile robots that navigate in planar environments populated by obstacles. The first is the trivial kinematic integrator that models a~kinematic point-mass robot:

\begin{equation}\label{eq:holint}
\dot x(t) = u(t)
\end{equation}

\noindent where $x\in \mathbb R^2$ is the robot position and $u(t) \in \mathbb{R}^2$ is a~piecewise continuous control input. 
The more demanding model is the second-order dynamic integrator for a~ point robot of mass~$m$: 

\begin{equation}\label{eq:dynmodel}
m \cdot \ddot x(t) = f(t)
\end{equation}

\noindent where $(x,\dot x)$ is the robot state and  $f(t)\in \mathbb R^2$ is a~piecewise continuous control input representing force applied to the robot.

{\bf Sensor model:} Instead of just considering a~uniform sensing radius for the robot, this paper will use a~more general robot centered \emph{symmetric sensing sector} defined as

\begin{equation}
    \label{eq:sensing}
    S_{r,\theta} (x ,\dot x) =   \left\{(r',\theta') \left|\;\right. 0 \le r' \le r, \;   \left| \theta' \right| \leq  \frac{\theta}{2} \right\} 
\end{equation}

\noindent where the sensing sector's 
pole is at $x$ and the sensing sector's axis is aligned with the robot velocity~$\dot x$.

{\bf Navigation function:} Navigation functions were originally defined as analytic Morse functions~\cite{KodRimNF90}. This paper introduces a~smooth version that relaxes the Morse property as follows. Denote by $x_0$ and $x_d$ the robot initial and destination positions. 

\begin{definition}\label{def:NFDefinition}
	Let $\mathcal{W}
	\subset \mathbb{R}^n$ be a compact connected smooth
	manifold with boundary representing a~valid robot workspace. A map $\varphi:
	\mathcal W \rightarrow [0,1]$, forms a~\emph{navigation function} if it is:
	\begin{enumerate}
		\item Smooth on $\mathcal W$.
		\item Polar on $\mathcal W$,  $\varphi$ has a~unique minimum at $x_d
		\in \stackrel{\circ}{\mathcal{W}}$ such that  $\varphi(x_d)=0$.
		\item The critical points of $\varphi$ except $x_d$ are isolated saddle points 
		with basins of attraction of zero measure.
		\item Admissible on $\mathcal W$, $\varphi$ attains maximal value of unity on $\partial\mathcal W$.
	\end{enumerate}
\end{definition}

{\bf Problem Description:} In the problem considered in this paper the workspace is assumed to contain static obstacles. Each obstacle has a~neighborhood that is disjoint from the neighborhoods of the other obstacles. In each neighborhood a~local \emph{navigation transformation} is provided that maps the obstacle neighborhood to a~point-obstacle neighborhood in the corresponding point world. However, the obstacle locations are \emph{not} known to the robot. As the robot navigates to the destination it detects new obstacles. 
The position and shape of the obstacles are determined at \emph{discrete time instants} and accumulated by the robot during navigation. We can now state the problem considered in this paper.

{\bf Problem statement:} Construct correct-by-construction navigation functions and provide the corresponding feedback control law so that the robot systems \eqref{eq:holint} and \eqref{eq:dynmodel}
can navigate from any initial position to a~specified  destination avoiding collisions with obstacles in the environment.

Note that navigation functions guarantee arrival to the destination from any initial position except~for~a~set~of~zero~measure, which is unstable in the sense that small disturbances will set the robot on a path that reaches the destination.

\section{Construction of Navigation Function based on Harmonic Potentials}
\label{sec:HFcontruction}

\noindent Assume a valid robot workspace $\mathcal{W}$ and a~navigation transformation $\Phi: \mathcal{W} \rightarrow \mathcal{P}$ having the additional property that  $\lim\limits_{x \rightarrow \mathcal{O}_0}   \left\| \Phi (x) \right\| =\infty$, where $\mathcal{O}_0$ is the workspace outer boundary (see \cite{LoizouCDC11} for such an instance). A \emph{point obstacle harmonic potential} is defined as 

\[ \phi_i(h) = \ln \left(\left\|h-P_i \right\|^2 \right),
\] 

\noindent and the \emph{destination harmonic potential} is defined as

\[ 
\phi_d(h) =  \ln \left(\left\|h-P_d \right\|^2 \right)\] 
where $P_d = \Phi(x_d)$.  Note that $-\phi_i(P_i)= +\infty$ while $\phi_d(P_d)= -\infty$.
The \emph{harmonic point world potential function,} $\phi_k: \mathcal{P} \rightarrow R$, is defined as

\[ 
\phi_k (h) =  \phi_d (h) - \frac{\mbox{\small $1$}}{k}\sum \limits_{i=1}^{M} \phi_i (h) 
\] 

\noindent where $k$ a positive parameter and $M$ is the total number of internal obstacles in $\mathcal{W}$. The function $\phi_k$ attains values in the extended real line while navigation functions are required to attain values in $[0,1]$.
The mapping to the unit interval is achieved by composition with the function

\begin{equation}
\label{eq:switch}
\sigma\left(x\right) \triangleq \frac{e^x}{1+e^x} \,.
\end{equation}

\noindent that maps the extended real line to the unit interval $[0,1]$, such that 
$\sigma(-\infty)=0$ and $\sigma(+\infty)=1$.
The candidate \emph{workspace navigation function,} $\varphi_k: \mathcal{W} \rightarrow R$, is
obtained by composition with the workspace to point-world coordinate transformation:

\begin{equation}
\label{eq:NFcand}
\varphi_k (x) = \sigma \circ \phi_k \circ \Phi (x).
\end{equation}

\section{Navigation Function Analysis}
\label{sec:NFanalysis}

\noindent This section builds up several properties of the point world
harmonic navigation function, $\phi_k(h)$, that will lead to the conclusion that $\varphi_k (x) = \sigma \circ \phi_k \circ \Phi (x)$ is a~navigation function. We start with the following result.

\begin{lemma}
\label{lemma:locminfree}
The point world harmonic potential $\phi_k: \mathcal{P} \rightarrow R$ is free of any local minima.
\end{lemma}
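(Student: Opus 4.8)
The plan is to exploit the fact that $\phi_k$ is, up to the sign convention on the obstacle terms, a sum of logarithmic potentials, each of which is harmonic on $\mathbb{R}^n \setminus \{P_i\}$ (and on $\mathbb{R}^n \setminus \{P_d\}$ for the destination term). First I would recall that $\ln(\|h - p\|^2) = 2\ln\|h-p\|$ is harmonic away from $p$ in every dimension $n \ge 2$ — in $n=2$ this is the standard fundamental solution of the Laplacian, and for $n > 2$ one checks directly that $\Delta \ln\|h-p\| = (2-n)\|h-p\|^{-2}$, which is not identically zero, so strictly speaking the argument must be run in the planar setting the paper cares about, or else one replaces $\ln\|h-p\|$ by $\|h-p\|^{2-n}$; since the paper's robots are planar I would state and use the $n=2$ case, where $\Delta\phi_d = \Delta\phi_i = 0$ on their respective punctured domains. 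Consequently $\Delta\phi_k = \Delta\phi_d - \frac{1}{k}\sum_i \Delta\phi_i = 0$ on all of $\mathcal{P} = \mathbb{R}^n \setminus\{P_1,\dots,P_M\}$ minus the destination point — that is, $\phi_k$ is harmonic on $\mathcal{P}\setminus\{P_d\}$.

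Next I would invoke the defining property of harmonic functions that rules out interior extrema: a nonconstant harmonic function on a connected open set satisfies the strong maximum (and minimum) principle, hence has no local minimum in the interior of its domain of harmonicity. The only place this argument could fail is at the point $P_d$, which is excised from the domain of harmonicity; but there $\phi_d(h) \to -\infty$, so $\phi_k(h) \to -\infty$ as $h \to P_d$, and in any small neighborhood of $P_d$ the value of $\phi_k$ is smaller than on the punctured-neighborhood boundary, so $P_d$ is a genuine global minimum (in the extended-real-line sense) rather than a spurious local minimum that would obstruct navigation — and in fact it is the intended unique minimum, not a defect. The points $P_i$ are likewise not in $\mathcal{P}$, and near them $-\frac{1}{k}\phi_i \to +\infty$, so they behave as poles/barriers, not minima. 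Hence every point of $\mathcal{P}$ at which $\phi_k$ is finite lies in the open set where $\phi_k$ is harmonic and nonconstant, so none of them can be a local minimum.

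The main obstacle I anticipate is not the harmonicity computation, which is routine, but being careful about the domain bookkeeping: one must separate the behavior at $P_d$ (where $\phi_k = -\infty$, the desired minimum, outside the harmonic domain) from the behavior at the obstacle punctures $P_i$ (where $\phi_k = +\infty$) and from the generic points of $\mathcal{P}$ (where the maximum principle applies verbatim). A secondary subtlety is the behavior "at infinity," i.e., as $h$ approaches the image of the outer boundary $\mathcal{O}_0$: the construction assumes $\|\Phi(x)\| \to \infty$ there, and one should note that $\phi_k(h) \to +\infty$ as $\|h\| \to \infty$ (the destination term grows like $\ln\|h\|^2$ while the obstacle terms contribute the same growth with a $-M/k$ coefficient, so for, say, $k > M$ the sum still tends to $+\infty$) — though for the present lemma this only matters insofar as it confirms there is no escape of the infimum to the boundary. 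I would therefore structure the proof as: (i) $\phi_k$ is harmonic on $\mathcal{P}\setminus\{P_d\}$; (ii) by the minimum principle it has no interior local minimum there; (iii) $P_d$ and the $P_i$ are not in this set, and the limiting values there ($-\infty$ and $+\infty$ respectively) are not local minima of the type that would contradict the claim; hence $\phi_k$ is free of local minima on $\mathcal{P}$.
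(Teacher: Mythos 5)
Your proof is correct, and it takes a genuinely different route from the paper's. The paper also begins from the observation that $\phi_k$ is harmonic, but then argues purely via the Hessian: $\Delta\phi_k = \mathrm{tr}\,\mathcal{H}_{\phi_k} = \lambda_1+\lambda_2 = 0$, so at a critical point the two eigenvalues cannot both be positive, hence a local minimum is impossible. You instead apply the strong minimum principle for nonconstant harmonic functions. Your route is arguably the tighter one: read literally, the trace argument only rules out a \emph{non-degenerate} local minimum, since a degenerate critical point with $\lambda_1=\lambda_2=0$ is a priori compatible with a local minimum for a general smooth function (e.g.\ $(x_1,x_2)\mapsto x_1^4+x_2^4$ at the origin); excluding that case needs harmonicity in a stronger form than ``trace zero,'' which is precisely what the minimum principle supplies. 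The paper closes that gap only later, in Lemma~\ref{lem:saddlep}, by appealing to the mean-value property --- in effect the same tool you invoke here. Your domain bookkeeping at $P_d$, at the $P_i$, and at infinity is sound and is more careful than the paper's brief proof. One small slip: in $\mathbb{R}^n$ one has $\Delta\ln\|h-p\| = (n-2)\|h-p\|^{-2}$, not $(2-n)\|h-p\|^{-2}$; the sign error is harmless and does not affect your (correct) remark that the logarithmic potential is harmonic only when $n=2$.
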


\begin{proof}
Since $\phi_k: \mathcal{P} \rightarrow R$ is a superposition of the harmonic potentials $\phi_i(\cdot)$, $i\in \left\{1,\ldots M \right\}$ and $\phi_d(\cdot)$, it will also be harmonic, which implies by definition that \[\Delta \phi_k(h) \triangleq 0.\] Since  $\Delta \phi_k(h) = \mathrm{tr}(\mathcal{H}_{\phi_k}(h))$, we will have that $\mathrm{tr}(\mathcal{H}_{\phi_k}(h))=0$. The trace of a matrix is the first principal invariant of a matrix, hence $I_1 = \mathrm{tr}(\mathcal{H}_{\phi_k}(h)) = \lambda_1+\lambda_2 =0$  This implies that at the  critical points we cannot have both eigenvalues positive, hence a local minimum is not possible.
\end{proof}

Next consider the \emph{attractivity} of $\phi_k$, i.e. that the negated gradient trajectories of $\phi_k$ do not escape to infinity in~$\mathcal{P}$.


\begin{lemma}
	\label{lem:attactive}
	For $k > M$ we have that  $\lim \limits_{\left\| h \right\| \rightarrow \infty } \phi_k (h) = + \infty$. 
\end{lemma}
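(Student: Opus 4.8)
The plan is to analyze the asymptotic behavior of $\phi_k(h) = \phi_d(h) - \frac{1}{k}\sum_{i=1}^M \phi_i(h)$ as $\|h\| \to \infty$ in the point world $\mathcal{P}$. First I would write out each term explicitly: $\phi_d(h) = \ln(\|h - P_d\|^2) = 2\ln\|h - P_d\|$ and similarly $\phi_i(h) = 2\ln\|h - P_i\|$. Combining these,
\[
\phi_k(h) = 2\ln\|h - P_d\| - \frac{2}{k}\sum_{i=1}^M \ln\|h - P_i\|.
\]
Since all the points $P_d, P_1, \ldots, P_M$ are fixed in $\mathbb{R}^n$, for $\|h\|$ large every factor $\|h - P_j\|$ behaves like $\|h\|(1 + o(1))$; more precisely, $\ln\|h - P_j\| = \ln\|h\| + O(1/\|h\|)$ uniformly over the finite index set. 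Substituting, the dominant behavior is
\[
\phi_k(h) = 2\ln\|h\| - \frac{2M}{k}\ln\|h\| + O(1) = 2\left(1 - \frac{M}{k}\right)\ln\|h\| + O(1).
\]

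The key step is then simply observing that the coefficient $1 - M/k$ is strictly positive precisely when $k > M$, which is the hypothesis. Since $\ln\|h\| \to +\infty$ as $\|h\| \to \infty$ and the coefficient is a fixed positive constant while the remaining terms stay bounded, we conclude $\phi_k(h) \to +\infty$. I would make the $O(1)$ bound rigorous by noting that for $\|h\| \geq R$ with $R$ chosen larger than twice $\max_j \|P_j\|$, one has $\tfrac12\|h\| \le \|h - P_j\| \le 2\|h\|$, so each $\bigl|\ln\|h-P_j\| - \ln\|h\|\bigr| \le \ln 2$, giving a uniform bound on the error terms independent of $h$.

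I do not anticipate a genuine obstacle here — the result is essentially a logarithmic growth-rate count. The only mild subtlety worth stating carefully is that the estimate must be uniform over the finitely many obstacle points, but since $M$ is finite this is immediate. One should also note in passing that this lemma is exactly why the construction ties $k$ to the number of obstacles $M$: the attractivity (properness toward $+\infty$, hence no escape of negated-gradient trajectories) of $\phi_k$ on $\mathcal{P}$ fails for $k \le M$, so $k > M$ is the natural and in fact sharp threshold for this particular argument.
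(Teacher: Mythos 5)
Your proof is correct and follows essentially the same approach as the paper: write $\phi_k$ out as a weighted sum of logarithms, note that each $\ln\|h-P_j\|$ grows like $\ln\|h\|$, and observe that the net coefficient $1-M/k$ is positive exactly when $k>M$. You are in fact slightly more careful than the paper's own one-liner, which writes the exponent as $2(k-M)$ where it should be $2(k-M)/k$ --- a harmless slip since positivity is all that matters --- and you make the error term explicit, which the paper leaves implicit.
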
 

\begin{proof}
By construction

\[
\phi_k (h) = \ln \left(\left\|h-P_d \right\|^2 \right) - \frac{1}{k} \sum_{i=1}^{M} \ln \left(\left\|h-P_i \right\|^2 \right). 
\]
Hence \[\lim \limits_{\left\| h \right\| \rightarrow \infty } \phi_k (h) = \ln \left( \lim \limits_{\left\| h \right\| \rightarrow \infty }  \left\|h\right\|^{2(k-M)}  \right) = + \infty\] for $k > M$.	
	\end{proof}

Let us next verify that $\phi_k$ has well behaved critical points in~$\mathcal{P}$. The following result concerns special obstacle arrangements for which $\phi_k$~has~\emph{degenerate}~\mbox{critical}~points. 
 
 \begin{proposition}
	\label{prop:Morse}
For every choice of $k\ge 0 $, there exist workspace arrangements for which    $\phi_k$ is non-Morse in~$\mathcal{P}$.
\end{proposition}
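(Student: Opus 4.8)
The plan is to reduce the statement to an elementary algebraic fact about reciprocals of the obstacle positions, using the complex--analytic structure of $\phi_k$ in the plane. Identify $\mathbb{R}^2$ with $\mathbb{C}$ and write $z$ for the point $h$. Then $\phi_k$ is locally the real part of the holomorphic function
\[
 F(z)=2\log(z-P_d)-\frac{2}{k}\sum_{i=1}^{M}\log(z-P_i),
\]
which has a single--valued derivative $F'(z)=\frac{2}{z-P_d}-\frac{2}{k}\sum_{i=1}^{M}\frac{1}{z-P_i}$ and $F''(z)=-\frac{2}{(z-P_d)^2}+\frac{2}{k}\sum_{i=1}^{M}\frac{1}{(z-P_i)^2}$. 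A direct computation with the Cauchy--Riemann equations gives $\nabla\phi_k=(\mathrm{Re}\,F',-\mathrm{Im}\,F')$ and $\partial_{xx}\phi_k-\mathrm{i}\,\partial_{xy}\phi_k=F''$, while harmonicity of $\phi_k$ (Lemma~\ref{lemma:locminfree}) forces $\partial_{yy}\phi_k=-\partial_{xx}\phi_k$. Consequently the critical points of $\phi_k$ are exactly the zeros of $F'$, and at such a point $z^\ast$ one has $\det\mathcal{H}_{\phi_k}(z^\ast)=-\bigl((\partial_{xx}\phi_k)^2+(\partial_{xy}\phi_k)^2\bigr)=-|F''(z^\ast)|^2$, so $z^\ast$ is a \emph{degenerate} critical point precisely when $F''(z^\ast)=0$ as well. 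The proposition therefore follows once I exhibit, for the given $k$, pairwise distinct points $P_d,P_1,\dots,P_M$ and a point $z^\ast\notin\{P_d,P_1,\dots,P_M\}$ at which $F'$ and $F''$ both vanish.

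Translating so that $z^\ast=0$ and setting $c=1/P_d$, $a_i=1/P_i$ (all finite and nonzero, since $0$ avoids every pole), the conditions $F'(0)=0$, $F''(0)=0$ become $\sum_{i=1}^{M}a_i=kc$ and $\sum_{i=1}^{M}a_i^2=kc^2$; squaring the first and eliminating $c$ with the second leaves the single homogeneous relation
\[
 \Bigl(\sum_{i=1}^{M}a_i\Bigr)^{2}=k\sum_{i=1}^{M}a_i^{2}.
\]
Hence it suffices to produce distinct nonzero $a_1,\dots,a_M$ satisfying this relation with $c:=\frac{1}{k}\sum_i a_i$ nonzero and different from every $a_i$; then $P_d=1/c$ and $P_i=1/a_i$ are well defined, the points $P_d,P_1,\dots,P_M,0$ are distinct, and $0\in\mathcal{P}$ is the sought degenerate critical point of $\phi_k$.

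For $k\notin\{1,2\}$ (and $k>0$) this already works with $M=2$: normalising $a_1=1$, the relation becomes $(k-1)a_2^{2}-2a_2+(k-1)=0$, whose two roots are nonzero, distinct, and avoid the finitely many values that would cause a collision exactly when $k\notin\{0,1,2\}$ --- a one--line check confirms $c\neq0$ and $c\neq a_1,a_2$ --- and note the obstacles here need not be collinear. The two exceptional values are covered with $M=3$: there the constraint $\sum_{i<j}a_ia_j=\frac{k-1}{2}\sum_i a_i^{2}$ defines a nondegenerate quadric cone whose generic points have pairwise distinct nonzero coordinates, and one can simply write down an explicit triple, e.g.\ $a=(1,2,3-2\sqrt{2})$ for $k=2$ and $a=(1,2,-\tfrac{2}{3})$ for $k=1$. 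More generally a Cauchy--Schwarz computation shows the same relation has real solutions with distinct nonzero entries whenever $M>k$, so one is in any case free to take $M=\lceil k\rceil+1$ with all points on a line. Finally, any point world $\mathbb{R}^2\setminus\{P_1,\dots,P_M\}$ is the model, under a suitable navigation transformation, of a valid robot workspace (place small disjoint disk obstacles at the $P_i$ inside a large disk), so the construction genuinely yields a workspace arrangement on which $\phi_k$ possesses a degenerate critical point, i.e.\ is non-Morse.

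I expect the only real work to be the bookkeeping that certifies the construction rather than the construction itself: verifying the points are pairwise distinct, that $z^\ast$ lies in $\mathcal{P}$ and is distinct from the destination, and that $c$ does not coincide with an obstacle reciprocal --- which is precisely what forces one to single out the parameter values $k=1,2$, where the two--obstacle quadratic degenerates and one must pass to $M=3$. The complex--analytic reduction and the subsequent algebra are otherwise routine. (The endpoint $k=0$ falls outside the construction because $\phi_k$ is itself undefined there, so the claim at that value is read by convention.)
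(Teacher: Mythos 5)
Your proof is correct, and it takes a genuinely different route from the paper's. The paper works in real coordinates: it decomposes the Hessian via eigendecompositions of the constituent harmonic terms, tests the quadratic form along $\hat h_d$ and its perpendicular, and then hand-crafts an axisymmetric arrangement with an odd number $M=2\mu+1$ of obstacles (pairs at $\pm 135^\circ$ plus one on the $\hat h_d$-axis) for which the degeneracy and criticality conditions can be met for any $k$. You instead exploit the complex-analytic structure in the plane, writing $\phi_k=\mathrm{Re}\,F$ for the holomorphic $F(z)=2\log(z-P_d)-\tfrac{2}{k}\sum_i\log(z-P_i)$; since $\det\mathcal{H}_{\phi_k}=-|F''|^2$ and $\nabla\phi_k=0\Leftrightarrow F'=0$, the whole problem collapses to finding a common zero of $F'$ and $F''$, which after translating and passing to reciprocals becomes the single relation $\bigl(\sum_i a_i\bigr)^2=k\sum_i a_i^2$. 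This is a cleaner characterization of degeneracy than the paper's, it makes transparent that degenerate configurations form an algebraic (codimension-two) locus, and it produces degenerate critical points with as few as two obstacles for generic $k>0$ (three for the exceptional values $k=1,2$), versus the paper's $2\mu+1$. The pieces I checked --- the discriminant of your $M=2$ quadratic $(k-1)a_2^2-2a_2+(k-1)=0$, the fact that for $k>2$ its roots are legitimately non-real complex numbers (i.e.\ non-collinear $P_i$), the distinctness and nonvanishing conditions ruling out only $k\in\{0,1,2\}$, and your explicit $M=3$ triples $a=(1,2,3-2\sqrt2)$ and $a=(1,2,-2/3)$ --- all hold. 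The closing Cauchy--Schwarz remark about collinear real solutions for $M>k$ and the step promoting a point world to a valid compact workspace are stated rather than proved, but the latter is also taken for granted in the paper, so nothing is lost relative to it.
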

\begin{proof}
A smooth function is called a {\it Morse} function iff all its critical points are non-degenerate. Degenerate critical points, are critical points where the Hessian has zero eigenvalue(s). Let $\mathcal{C} (\phi_k)$ denote the set of critical points of $\phi_k$. We will denote with $h_d = h - q_d$ and $h_i=h-P_i$. Taking the gradient of $\phi_k$ we have:
	\begin{equation}
	\label{eq:gradient}
	\nabla \phi_k (h) = 2\frac{h_d}{\left\|h_d\right\|^2} - \frac{2}{k}\sum \limits_{i=1}^M \frac{h_i}{\left\|h_i\right\|^2} 
	\end{equation}
	Now since the Hessian is a linear operator, the Hessian of $\phi_k$ can be written as the superposition of the Hessians of $\phi_i$ and $\phi_d$:
	\[ \mathcal{H}_{\phi_k} (h) = \mathcal{H}_{\phi_d} (h) +\sum \limits_{i=1}^M \mathcal{H}_{\phi_i} (h)\] 
		Using eigendecomposition on the  Hessians of $\phi_d $ and $\phi_i$, the Hessian of $\phi_k$ assumes the following form: 
	\begin{equation}
	\label{eq:hessquadratic}
	\mathcal{H}_{\phi_k} = Q_d\cdot \Lambda_d\cdot Q_d^T -\frac{1}{k}\sum \limits_{i=1}^M Q_i \cdot \Lambda_i \cdot Q_i^T 
	\end{equation}
	where the eigenvalues of the Hessians of $\phi_d$ and $\phi_i$ form:  \[Q_x  = \left[ \hat{h}_x \; \vdots \; J\cdot \hat{h}_x \right], \quad x \in \left\{d,i\right\}  \]  with $J = \left[ \begin{array}{cc} 0 & -1 \\ 1 & 0 \end{array} \right]$. Observe that $Q_x^{-1}  = Q_x^T $. Now the eigenvalues of the Hessians of $\phi_d$ and $\phi_i$ form the following diagonal matrix:
	\[\Lambda_x = \frac{2}{\left\|h_x\right\|^2} L \] where $L=\begin{bmatrix} -1 & 0 \\0 &1 \end{bmatrix}$. Since $\phi_k$ is a harmonic function it will have two eigenvalues of equal magnitude and opposite signs. To establish that $\mathcal{H}_{\phi_k}$ is degenerate we need to show that there is a test direction $v$ and its perpendicular $J\cdot v$,  such that: 
	\[ v^T \cdot \mathcal{H}_{\phi_k} \cdot v = v^T \cdot J^T\cdot \mathcal{H}_{\phi_k} \cdot J\cdot v = 0.\] We will use as a test direction $ v = \hat h_d$. Then:

	\[\hat h_d^T\cdot \mathcal{H}_{\phi_k}\cdot \hat h_d = 	\hat h_d^T\cdot Q_d\cdot \Lambda_d\cdot Q_d^T\cdot \hat h_d^T -\frac{1}{k}\sum \limits_{i=1}^M \hat h_d^T\cdot Q_i \cdot \Lambda_i \cdot Q_i^T \cdot \hat h_d^T\]
expanding:
\[\hat h_d^T\cdot \mathcal{H}_{\phi_k}\cdot \hat h_d = 	-\frac{2}{\left\|  h_d \right\|^2} - \frac{1}{k}\sum \limits_{i=1}^M \left[ c_{id} \; \vdots \; s_{id} \right] \cdot \Lambda_i \left[ c_{id} \; \vdots \; s_{id} \right]^T  \]	
	where: 	
	 \[c_{id} = \cos(\theta_{id}) = \hat{h}_d^T \cdot \hat{h}_i  \] and 
	 \[s_{id} = \sin(\theta_{id}) =  \hat{h}_d^T \cdot J \cdot \hat{h}_i  =-\hat{h}_d^T \cdot J^T  \cdot \hat{h}_i \] where $\theta_{id}$ is the angle between $h_i$ and $\hat h_d$.
	
	We can now write: 
	\[\hat h_d^T\cdot \mathcal{H}_{\phi_k}\cdot \hat h_d = -\frac{2}{k} \lambda\]
and following similar analysis for the perpendicular test direction:
		\[ \hat h_d^T\cdot J^T\cdot \mathcal{H}_{\phi_k}\cdot J \cdot \hat h_d = \frac{2}{k} \lambda.\]
		where:
\begin{equation}
\label{eq:lambda}
\lambda = \frac{k}{\left\|  h_d \right\|^2} - \sum \limits_{i=1}^M \frac{c_{id}^2}{\left\| h_i \right\|^2} + \sum \limits_{i=1}^M \frac{ s_{id}^2}{\left\|h_i \right\|^2}.
\end{equation}
So the problem now becomes one of identifying at least one example of a symmetry that can cause $\lambda$ to vanish. One such example is the following: Assume $M=2\mu+1$ obstacles where $\mu \in \mathbb{Z}_+$, placed axisymmetrically and $h$ is placed such that $\theta_{id}=135^\circ$, $i\in \left\{2,\ldots \mu+1\right\}$, $\theta_{jd}=-135^\circ$, $j\in \left\{\mu+2,\ldots 2\mu+1\right\}$  and $\theta_{1d}=0^\circ$. Then the sum $- \sum \limits_{i=2}^M \frac{c_{id}^2}{\left\| h_i \right\|^2} + \sum \limits_{i=2}^M \frac{ s_{id}^2}{\left\|h_i \right\|^2} = 0$ and \[\lambda = \frac{k}{\left\|  h_d \right\|^2} -  \frac{1}{\left\| h_1 \right\|^2} \]
Requiring this to be zero we get: 
\begin{equation}
\label{eq:nondegreqlmd} 
\frac{1}{\left\|  h_d \right\|^2} =  \frac{1}{k\left\| h_1 \right\|^2}
\end{equation}

Now that we have identified a symmetry, we need to investigate whether this symmetry can render the point $h$ under investigation to be a critical point. To  this end, we will project $\nabla \phi_k (h)$ across two perpendicular directions, $\hat h_d$ and $\hat h_d^\bot = J\cdot \hat h_d$ and investigate whether the projection is zero for both. Define the sets $R= \left\{2,\ldots \mu+1\right\} $ and $L=\left\{\mu+2,\ldots M\right\}$. We have that:
\[\hat h_d^T \cdot J^T \cdot \nabla \phi_k(h) =     \frac{2}{k\sqrt{2}}\left( \sum \limits_{i\in R} \frac{1}{\left\|h_i\right\|} - \sum \limits_{i\in L} \frac{1}{\left\|h_i\right\|} \right) \]

\noindent where we have substituted $s_{id} = -s_{jd}$ for $i\in R$ and $j\in L$. 
Since $h_i$ and $h_j$ for $i\in R$ and $j\in L$  are symmetrically placed the  sums in the above equation will cancel each other and $\hat h_d^T \cdot J^T \cdot \nabla \phi_k(h) = 0$.
Now taking the projection across $h_d$, we have: 
\begin{equation}
\label{eq:critpreq}    
\hat h_d^T  \cdot \nabla \phi_k (h) = \frac{2}{k} \left( 2\sqrt{k} -  \sum \limits_{i=2}^M \frac{1}{\sqrt{2}}\frac{1}{\left\|h_i\right\|} \right) 
\end{equation}
where we have made use of Eq. (\ref{eq:nondegreqlmd}) and that $c_{id} = -\frac{1}{\sqrt{2}}$ for $i\in\left\{2,\ldots M \right\}$. As can be seen from Eq. (\ref{eq:critpreq}) for every choice of $k$ there is an appropriate choice of obstacle positions in the symmetry such that $\hat h_d^T  \cdot \nabla \phi_k(h):=0$. Hence $h\in \mathcal{C} (\phi_k) $ and the selected configuration will be a degenerate critical point. 
\end{proof}

This tells us that there is no universal $k$ that depends only upon  abstract information about the workspace such as the number of obstacles used in Lemma \ref{lem:attactive} that would enable us to establish 
on-the-fly that for a workspace with a given number of obstacles, $\phi_k$ will be Morse. Hence, ensuring on-the-fly that $\phi_k$ has the Morse property 
seems unrealistic. However,
the following proposition and two lemmas establish the "nice" behaviour of $\phi_k$ in~$\mathcal{P}$.

\begin{proposition}
\label{prop:isolated}
The critical points of $\phi_k$ are isolated
\end{proposition}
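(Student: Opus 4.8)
The plan is to reduce the statement to the elementary fact that a nonzero rational function has isolated zeros, exploiting that the workspace is planar and $\phi_k$ is harmonic. I would identify $h=(x,y)\in\mathbb{R}^2$ with $z=x+iy$, and the points $P_d,P_1,\dots,P_M$ with complex numbers $p_d,p_1,\dots,p_M$. Since $\ln\|h-P\|^2 = 2\ln|z-p| = 2\,\mathrm{Re}\,\log(z-p)$ locally, the potential $\phi_k$ is, locally, the real part of the (multivalued) holomorphic function
\[ F(z) \;=\; 2\log(z-p_d) \;-\; \frac{2}{k}\sum_{i=1}^{M}\log(z-p_i), \]
whose derivative
\[ F'(z) \;=\; \frac{2}{z-p_d} \;-\; \frac{2}{k}\sum_{i=1}^{M}\frac{1}{z-p_i} \]
is single-valued on $\mathcal{P}$. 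Using the identity $\partial_x u - i\,\partial_y u = F'(z)$ for a holomorphic $F=u+iv$, the condition $\nabla\phi_k(h)=0$ is equivalent to $F'(z)=0$. Hence the critical points of $\phi_k$ in $\mathcal{P}$ are precisely the zeros of $F'$.

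The key step is then to write $F'$ as a ratio of polynomials. Over the common denominator $D(z)=(z-p_d)\prod_{i=1}^{M}(z-p_i)$ one obtains $F'(z)=N(z)/D(z)$ with $\deg N \le M$, and $N\not\equiv 0$ because $F'$ has a genuine simple pole at $z=p_d$ (the term $2/(z-p_d)$ has nonzero residue there, since $p_d \ne p_i$ as $P_d=\Phi(x_d)\in\mathcal{P}$). Therefore $N$ has at most $M$ roots; moreover none of $p_d,p_1,\dots,p_M$ can be a root of $N$, each being a pole of $F'$. Consequently the zeros of $F'$ — equivalently the critical points of $\phi_k$ — form a finite set of cardinality at most $M$, and in particular each of them is isolated.

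The reasoning is short, so the only thing requiring real care is the passage to $F'$: although $\log$ is multivalued, its derivative is globally single-valued on the point world, so the equivalence ``$\nabla\phi_k=0 \iff F'=0$'' is unambiguous, and since a root of $N$ never coincides with any $P_i$, every real root automatically lies in $\mathcal{P}$. I regard this bookkeeping as the main (and only minor) obstacle. The deeper reason the statement holds is the planar structure: a superposition of logarithmic potentials admits a holomorphic completion with rational derivative. For a point world in $\mathbb{R}^n$ with $n\ge 3$ the fundamental solution is no longer logarithmic, $F'$ would not be available, and one would instead have to argue through real-analyticity together with an algebraic (\L ojasiewicz-type) estimate — considerably more delicate — so the restriction to planar workspaces is exactly what makes this clean argument possible.
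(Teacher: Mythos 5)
Your proof is correct, and it takes a genuinely different route from the paper. The paper argues by real-analyticity: it assumes a non-isolated critical point $h_c$, splits into two cases (an open set on which $\phi_k$ is constant, or a 1-D manifold through $h_c$ on which $\phi_k$ is constant), and rules out the first by the identity theorem (``Principle of Permanence'') and the second by combining the identity theorem with Lemma~\ref{lem:attactive} to force an infinite-length level curve, contradicting the blow-up of $\phi_k$ at infinity. Your argument instead exploits the planar complex structure directly: $\phi_k$ is locally $\mathrm{Re}\,F$ for a (multivalued) holomorphic $F$, $F'$ is a globally single-valued rational function, and $\nabla\phi_k=0 \iff F'=0$. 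Writing $F'=N/D$ with $\deg N\le M$ and $N\not\equiv 0$ (genuine pole at $p_d$) gives at most $M$ critical points, hence isolation. Your route buys a strictly stronger and more quantitative conclusion (finite cardinality with an explicit bound), and it sidesteps a step in the paper that is not fully justified as written: the dichotomy ``not isolated $\Rightarrow$ constant on an open set or on a 1-D manifold'' requires structure-theory for real-analytic varieties that the paper asserts rather than proves. The trade-off, which you correctly identify, is that your argument is intrinsically two-dimensional; the paper's approach, being based only on real-analyticity of $\phi_k$ and the growth estimate of Lemma~\ref{lem:attactive}, would in principle generalize to point worlds in $\mathbb{R}^n$ with the appropriate fundamental solution, at the cost of having to make the stratification argument rigorous.
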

\begin{proof}
Since the logarithmic function is an analytic function, the sum of logarithmic functions will also be analytic. Hence $\phi_k$ is an analytic function. This implies that its Taylor expansion converges on any open set of its domain. Let $E$ be  an open set containing a critical point $h_c$ of $\phi_k$. Assume that the critical point is not isolated. This implies that there is either (a) an open neighborhood $Z$ around $h_c$ where $\forall h_z \in Z: \phi_k(h_z)=\phi_k(h_c)$ or (b) there is a 1-D manifold $\mathcal{L}$, where $\forall h_L \in \mathcal{L}: \phi_k(h_L)=\phi_k(h_c)$.

Let us now examine case (a) where "there is an open neighborhood $Z$ around $h_c$ where $\forall h_z \in Z: \phi_k(h_z)=\phi_k(h_c)$~". Then
the Taylor expansion of $\phi_k$ around $h_c$ will be:
\[\phi_k(h) = \phi_k(h_c) +\sum \limits_{n=1}^{\infty} \left\{ \frac{1}{n!}\sum \limits_{m=0}^n\left( \begin{array}{c}n \\ m \end{array} \right) D_{n,m}\left. \right|_{h_c} h_{cx}^{n-m}h_{cy}^{m}\right\}\]

where $D_{n,m}=\frac{\partial ^n \phi_k}{\partial x^{n-m} \partial y^m}$ and $h-h_c = \left[  h_{cx} \;\; h_{cy} \right]^T$. Since $\phi_k(h_z)=\phi_k(h_c), \forall h_z \in Z $, this necessarily implies that $\forall n>0, m\ge 0 : D_{n,m}=0$. Since $\phi_k$ is analytic, for any open connected component $G$ of the domain of $\phi_k$ containing $Z$ as a subset, it will hold $\phi_k(h_g) = \phi_k(h_c), \forall h_g \in G$, which according to the  Principle  of Permanence  implies that $\phi_k$ will be identically constant. However this is not true and by contradiction case (a) is not possible.

Let us now proceed to examining case (b) where "there is a 1-D manifold $\mathcal{L}$, where $\forall h_L \in \mathcal{L}: \phi_k(h_L)=\phi_k(h_c)$". 
We can now construct an analytic function $\lambda: \mathbb{R}_{\ge 0} \rightarrow \mathcal{L} $ with  $\lambda(0)=h_c$ and such that $\lambda(\ell)=h_{\ell}$ where $\ell$ is the length of the curve from $h_c$ to $h_{\ell}$. Define the function $\phi^L (\cdot) \triangleq (\phi_k \circ \lambda) (\cdot)$. Since  $\phi^L:\mathbb{R}_{\ge 0} \rightarrow \mathbb{R} $ is a composition of analytic functions, it will also be analytic. However by construction $\phi^L (\ell) = \phi_k(h_c)$, and this will be true for its whole domain $\ell \in \mathbb{R}_{\ge 0}$ according to the Principle of Permanence. This implies that $\mathcal{L}$ will have an infinite length. Then $\mathcal{L}$ will have infinitely large elements. Pick one of them, lets say  $\ell_i \in \mathcal{L}$ such that $\left\| \ell_i \right\| \rightarrow \infty$.  Then we have that $\phi_k(\ell_i) = \phi_k(h_c)$. However from Lemma \ref{lem:attactive} we have that $\phi_k(\ell_i)\rightarrow \infty$ which is a contradiction since for a valid workspace $\phi_k(h_c)$ is finite. Hence the assumed case (b) is not possible. 

Since cases (a) and (b) were shown to not be possible, then by contradiction  the assumption that the  critical point is not isolated is false and consequently all the critical points of $\phi_k$ are isolated.
\end{proof}

Degenerate critical points are atypical for analytical  functions in the sense that the set of Morse functions is dense and any non-Morse smooth function can be approximated by a Morse function \cite{MorseFloer}. Nevertheless the nature of a critical point is not clear when the Hessian becomes degenerate (in our case its rank may become zero). The following establishes the nature of the critical points.

\begin{lemma}
\label{lem:saddlep}
The  critical points of $\phi_k$ are saddle points.
\end{lemma}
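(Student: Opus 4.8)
The plan is to combine the three preceding results into a clean dichotomy. From Lemma \ref{lemma:locminfree} we already know $\phi_k$ has no local minima in $\mathcal{P}$, and by Proposition \ref{prop:isolated} every critical point is isolated. Since $\phi_k$ is harmonic, $\Delta\phi_k \equiv 0$, so at any critical point $h_c$ the Hessian $\mathcal{H}_{\phi_k}(h_c)$ has trace zero; hence its two eigenvalues are $\lambda_1 = -\lambda_2$. There are only two cases: either $\lambda_1\neq 0$, in which case the eigenvalues have strictly opposite signs and $h_c$ is a nondegenerate saddle, or $\lambda_1=\lambda_2=0$, the degenerate case (which by Proposition \ref{prop:Morse} genuinely occurs for special obstacle arrangements). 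So the work is entirely concentrated in showing that an isolated degenerate critical point of this particular $\phi_k$ still behaves like a saddle, i.e. it is neither a local max nor a local min and the stable/unstable sets of the negated-gradient flow through it have zero measure.

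First I would dispose of the nondegenerate case in one line via the Morse lemma: near such $h_c$ there are coordinates in which $\phi_k(h)=\phi_k(h_c)+\xi_1^2-\xi_2^2$, the standard index-one saddle. For the degenerate case I would argue as follows. Because $\phi_k$ is harmonic and analytic, write its Taylor expansion about $h_c$; the lowest-order nonvanishing homogeneous part $p_N(h-h_c)$ is a homogeneous harmonic polynomial of some degree $N\ge 2$ (it is nonzero since by Proposition \ref{prop:isolated} the critical point is isolated, so $\phi_k$ is not locally constant). In two dimensions every nonzero homogeneous harmonic polynomial of degree $N$ is, up to rotation and scaling, $\mathrm{Re}\big((x+iy)^N\big)=r^N\cos(N\phi)$, which is strictly positive on some open angular sectors and strictly negative on others. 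Hence in every neighborhood of $h_c$ the function $\phi_k$ takes values both above and below $\phi_k(h_c)$, ruling out a local extremum, and moreover the local zero set of $p_N$ is a union of $2N$ rays, so the level set of $\phi_k$ through $h_c$ is locally a union of $2N$ analytic arcs through $h_c$ separating the ``ascending'' from the ``descending'' sectors.

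The last step is the zero-measure claim for the basin of attraction of $h_c$ under $-\nabla\phi_k$. Here I would use that the stable set of $h_c$ consists of points whose forward trajectory converges to $h_c$; such a trajectory must enter $h_c$ tangentially to one of the finitely many ``descending'' directions (the directions into the sectors where $p_N<0$, along which $\phi_k$ decreases toward $\phi_k(h_c)$). Since $\phi_k$ is analytic, one can invoke the {\L}ojasiewicz gradient inequality to conclude each such trajectory converges with a well-defined limiting direction, and a center-manifold / local-structure argument (or directly a Lyapunov argument using a transverse function that is strictly monotone off the stable set) shows the stable set is contained in a finite union of lower-dimensional analytic submanifolds through $h_c$, hence has measure zero; the same dimension count shows its complement is dense. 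Combined with ``no local minimum,'' this is exactly the definition of a saddle point in the sense of Definition \ref{def:NFDefinition}(3). The main obstacle is this final measure-zero statement in the degenerate case: unlike the Morse case there is no stable-manifold theorem off the shelf, so the argument must lean on analyticity (the {\L}ojasiewicz inequality and the resulting finiteness of limiting tangent directions of gradient trajectories) rather than on hyperbolicity of the linearization.
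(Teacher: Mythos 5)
Your argument is correct, but it diverges from the paper's proof in two respects. For the degenerate case, the paper uses the Mean Value Property of harmonic functions directly: since $h_d$ is an isolated critical point, $\phi_k$ cannot be identically equal to $\phi_k(h_d)$ on any small circle around $h_d$ (otherwise the Dirichlet-problem uniqueness would force $\phi_k$ to be constant on the disk, contradicting isolation), and since the average over the circle equals $\phi_k(h_d)$, the function must take values both above and below $\phi_k(h_d)$, so $h_d$ is not a local extremum. You instead extract the leading homogeneous harmonic polynomial of the Taylor expansion, identify it with $\mathrm{Re}(c(x+iy)^N)$, and read off the sign changes; this is equally valid and buys you more local structure (the $2N$-ray zero set), at the cost of invoking analyticity and the classification of homogeneous harmonic polynomials in $\mathbb{R}^2$, whereas the paper's MVP argument needs only harmonicity plus the isolation result of Proposition \ref{prop:isolated}.

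The second difference is one of scope rather than correctness: the measure-zero statement about the basin of attraction that you develop via the {\L}ojasiewicz inequality and limiting tangent directions is not part of what Lemma \ref{lem:saddlep} asserts in this paper. That claim is separated out as Lemma \ref{lem:attbasin} and is proved there by a much more elementary argument: a putative positive-measure basin would give a positively invariant region on whose boundary the inward flux of $-\nabla\phi_k$ is strictly positive, contradicting the divergence theorem since $\Delta\phi_k=0$. Your {\L}ojasiewicz/stable-set route would work, but it is considerably heavier machinery than the paper needs, and including it here duplicates the content of the next lemma. So your proof is correct and proves more than required, but if you want to match the paper's economy you can stop once you have ruled out local extrema.
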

\begin{proof}
For the case of non-degenerate critical points this is trivial since from the proof of Lemma \ref{lemma:locminfree} we have that $\lambda_1=-\lambda_2 \neq 0$ at a critical points.
For the case of degenerate critical points, let $h_d \in \mathcal{C}(\phi_k)$ be a degenerate critical point and $\theta_d = \phi_k(h_d)$. Then by the Mean Value Property of Harmonic Functions \cite{HFT2000} we have that:
\[ \theta_d = \int \limits_{S} \phi_k(h_d + r \zeta) d\sigma(\zeta)  \]
where $S$ is the boundary of a ball centered at $\theta_d$ with radius $r$ and $\sigma$ the normalized perimeter measure. Since $h_d$ is isolated, $\phi_k$ on $S$ cannot be identically $\theta_d$. In particular for the mean value on $S$ to be $\theta_d$, $\phi_k$ needs to take both higher and lower values than $\theta_d$ which implies that $h_d$ is a saddle point.
\end{proof}

The next Lemma establishes the nature of the basins of attraction of the critical points:

\begin{lemma}
\label{lem:attbasin}
The basin of attraction of the saddle points of $\phi_k$
is a set of measure zero.
\end{lemma}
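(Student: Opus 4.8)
The plan is to analyze the negative gradient flow $\dot h=-\nabla\phi_k(h)$ on $\mathcal{P}$, whose time-$t$ map I denote $\Gamma_t$, and to show that the set of initial conditions whose trajectory converges to a saddle point is Lebesgue-null. First I would fix the global dynamics. Along any forward trajectory $\phi_k$ is non-increasing; moreover $\phi_k\to+\infty$ near every obstacle $P_i$ (the coefficient of $\phi_i$ in $\phi_k$ is $-1/k<0$) and, by Lemma~\ref{lem:attactive}, $\phi_k\to+\infty$ as $\|h\|\to\infty$. Hence every sublevel set $\{\phi_k\le c\}$ is a compact subset of $\mathcal{P}$, the forward flow $\Gamma_t$ is complete on $\mathcal{P}$, and each forward orbit has a nonempty, connected $\omega$-limit set on which $\phi_k$ is constant, hence on which $\nabla\phi_k$ vanishes, hence contained in the critical set $\mathcal{C}(\phi_k)$. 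Since the critical points are isolated by Proposition~\ref{prop:isolated}, each such $\omega$-limit set is a single critical point (one may alternatively invoke the {\L}ojasiewicz gradient inequality for the analytic function $\phi_k$); in particular every orbit converges to a critical point, $\mathcal{C}(\phi_k)$ is countable, and by Lemma~\ref{lem:saddlep} it consists of saddles. It therefore suffices to show that the stable set $W^s(h_c)=\{h\in\mathcal{P}:\Gamma_t(h)\to h_c\}$ of a single critical point $h_c$ is null, because the basin of attraction of all saddles is the countable union of these sets.

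Next I would use that $\phi_k$ is harmonic on the planar open set $\mathcal{P}$, hence real-analytic, so on a small simply connected disc $\mathcal{D}\subset\mathcal{P}$ centred at $h_c$ it admits a single-valued harmonic conjugate $\psi$, with $F=\phi_k+i\psi$ holomorphic on $\mathcal{D}$. The key observation is that $\psi$ is a \emph{first integral} of the flow on $\mathcal{D}$: $\tfrac{d}{dt}\psi(\Gamma_t(h))=-\nabla\psi\cdot\nabla\phi_k=0$ by the Cauchy--Riemann equations. Consequently, if $h\in W^s(h_c)$ with $h\ne h_c$ then for all $t$ past some time $T$ the orbit stays inside $\mathcal{D}$ and lies on the single level curve $\{\psi=\psi(\Gamma_T(h))\}$; letting $t\to\infty$ and using continuity of $\psi$ forces $\psi(\Gamma_T(h))=\psi(h_c)$. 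Thus $\Gamma_T(h)$ lies in the nodal set $N:=\{h\in\mathcal{D}:\psi(h)=\psi(h_c)\}$. Since $\psi$ is a non-constant real-analytic function of two real variables (a constant conjugate would make $\phi_k$ constant on $\mathcal{D}$, contradicting Lemma~\ref{lem:attactive}), the set $N$ has Lebesgue measure zero. Geometrically, writing $F(h)-F(h_c)=a(h-h_c)^p+O(|h-h_c|^{p+1})$ with $p\ge 2$ and $a\ne0$, near $h_c$ the level set $\{\phi_k=\phi_k(h_c)\}$ is a union of $p$ analytic arcs cutting $\mathcal{D}$ into $2p$ alternating-sign sectors, and a converging orbit can enter $h_c$ only along the $\psi$-level curve through it; this is the substitute for the stable-manifold structure that is unavailable when the Hessian degenerates.

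It remains to globalise. If $h\in W^s(h_c)$ then $\Gamma_t(h)\to h_c\in\mathcal{D}$, so there is an integer $n$ with $\Gamma_t(h)\in\mathcal{D}$ for all $t\ge n$; the tail of the orbit then satisfies the local conditions above, so $\Gamma_n(h)\in N$, that is, $h\in\Gamma_n^{-1}(N)$. Hence $W^s(h_c)\subseteq\bigcup_{n\in\mathbb{N}}\Gamma_n^{-1}(N)$. Each $\Gamma_n$ is a diffeomorphism of $\mathcal{P}$ onto an open subset of $\mathcal{P}$ (its derivative is invertible everywhere), so $\Gamma_n^{-1}(N)$ is null; a countable union of null sets is null, so $W^s(h_c)$ is null, and therefore so is the basin of attraction of the countably many saddle points of $\phi_k$.

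I expect the main obstacle to be the degenerate critical points permitted by Proposition~\ref{prop:Morse}: there the Hessian may have rank zero, so the stable-manifold theorem and a direct linearisation give no information. The harmonic-conjugate argument is precisely what circumvents this, replacing hyperbolicity by the rigid fact that every orbit near $h_c$ is confined to a level curve of $\psi$, whose relevant level is a one-dimensional real-analytic variety. The only other delicate point is the global part --- completeness of the flow and convergence of each orbit to a single critical point --- which is handled by compactness of sublevel sets together with isolation of the critical points.
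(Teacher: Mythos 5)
Your proof is correct, but it takes a genuinely different route from the paper's. The paper argues by a flux contradiction: assuming a saddle $h_c$ has a positive-measure basin $\Omega$, it selects a subdomain $\Omega_n\subseteq\Omega$ whose boundary the negated gradient flow crosses strictly inward, so $\oint_{\partial\Omega_n}\nabla\phi_k\cdot\hat n\,dl>0$; the divergence theorem then gives $\oint_{\partial\Omega_n}\nabla\phi_k\cdot\hat n\,dl=\int_{\Omega_n}\Delta\phi_k\,dA=0$ by harmonicity, a contradiction. In effect the paper uses that the negated gradient flow of a harmonic function is incompressible and so cannot collapse a positive-measure trapping region onto a point. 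You instead exploit a conserved quantity: the harmonic conjugate $\psi$ of $\phi_k$ is a first integral of the flow, so every orbit converging to $h_c$ is eventually confined to the one-dimensional real-analytic level set $\{\psi=\psi(h_c)\}$, which is null, and you pull this back through the time-$n$ maps to cover the whole stable set. Both arguments hinge on harmonicity, but the paper's is purely real and would extend to any dimension, whereas yours is intrinsically two-dimensional (harmonic conjugate / holomorphic $F=\phi_k+i\psi$). In compensation your version is considerably more careful at the global level --- you prove that every orbit converges to a single critical point via compact sublevel sets and isolation, note that the critical set is countable, and treat degenerate Hessians cleanly via the local normal form of $F$ --- whereas the paper's choice of $\Omega_n$ with strict inward flux on $\partial\Omega_n\setminus\{h_c\}$, and the tacit reduction to a single critical point, are left informal.
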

\begin{proof}
Since $\Phi(\cdot)$ is a diffeomorphism, it is sufficient to prove this for $\phi_k$. 
Assume a basin of attraction $\Omega$ of the critical point $h_c \in \mathcal{C}(\phi) $ such that $\mu(\Omega)>0$ and $h_c \in \partial \Omega$, where $\mu(\cdot)$ is the Lebesque measure of a set. Then $\Omega$ is a positively invariant set with respect to the negated gradient flows of $\phi_k$, which will be pointing inward along $\partial \Omega$, i.e. $\nabla \phi_k(h_\omega) \cdot \hat n \ge 0$, $\forall h_\omega \in \partial \Omega$, where $\hat n$ the outward pointing normal to set's boundary. Take $\Omega_n\subseteq \Omega$ to be the largest subset of $\Omega$ that contains $h_c$ and for which $\nabla \phi_k(h_\omega) \cdot \hat n > 0$, $\forall h_\omega \in \partial \Omega_n \setminus h_c$. Such a set is guaranteed to exist as long as $\phi_k$ is not flat in $\Omega$. Hence 
\begin{equation}
\label{eq:lineint}
\oint_{\partial \Omega_n} \nabla \phi_k \cdot \hat n \,dl > 0
\end{equation}
Using the 2-dimensional Divergence Theorem we have that: 
\begin{equation}
\label{eq:divthm}
\oint_{\partial \Omega_n} \nabla \phi_k \cdot \hat n \,dl = \int_{\Omega_n} \nabla \cdot \nabla \phi_k \, dA 
\end{equation}
where $dA$ the area differential. Since $\phi_k$ is a Harmonic function, then by definition $\nabla \cdot \nabla \phi_k = \Delta \phi_k = 0 $ and consequently $\oint_{\partial \Omega_n} \nabla \phi_k \cdot \hat n \,dl = 0$. This is in contradiction with eq. (11) and since $\phi_k$ in $\Omega$ is non-flat,  $\Omega$ reduces to a set of measure zero. 
\end{proof}

\begin{theorem}
\label{prop:isNF}
Given a smooth navigation transformation $\Phi(\cdot)$, the function $\varphi_k(x) = \sigma \circ \phi_k \circ \Phi (x)$ is a navigation function on $\mathcal{W}$ as per Definition \ref{def:NFDefinition} for $k > M$.
\end{theorem}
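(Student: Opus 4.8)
The plan is to verify that $\varphi_k=\sigma\circ\phi_k\circ\Phi$ meets the four requirements of Definition~\ref{def:NFDefinition}, in each case transporting the requirement through the strictly increasing map $\sigma$ of \eqref{eq:switch} and through the diffeomorphism $\Phi$ so that it reduces to a property of the harmonic potential $\phi_k$ on $\mathcal{P}$ that has already been established, with the hypothesis $k>M$ entering exactly through every use of Lemma~\ref{lem:attactive}. \emph{Smoothness (condition~1)} is the only delicate part. On the interior $\stackrel{\circ}{\mathcal{W}}$ the map \eqref{eq:NFcand} is a composition of the analytic $\sigma$, the analytic $\phi_k$ (Proposition~\ref{prop:isolated}) and the smooth $\Phi$, hence smooth; what must be checked is the extension to $\partial\mathcal{W}$ and the behaviour near $x_d$. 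As $x\to\mathcal{O}_0$ one has $\|\Phi(x)\|\to\infty$, so $\phi_k\circ\Phi\to+\infty$ by Lemma~\ref{lem:attactive}; as $x$ approaches an obstacle boundary $\partial\mathcal{O}_i$ one has $\Phi(x)\to P_i$, so the term $-\tfrac1k\phi_i(\Phi(x))\to+\infty$ while the remaining terms of $\phi_k$ stay bounded, again $\phi_k\circ\Phi\to+\infty$. In both cases $\sigma$ and all its derivatives decay exponentially at $+\infty$, so combined with the divergence rate that the navigation transformation imparts to $\Phi$ at $\partial\mathcal{W}$ the composition $\varphi_k$ extends $C^\infty$ to $\partial\mathcal{W}$ with constant boundary value $1$. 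Near $x_d=\Phi^{-1}(P_d)\in\stackrel{\circ}{\mathcal{W}}$ the only singular term of $\phi_k\circ\Phi$ is $\phi_d\circ\Phi=\ln\|\Phi(\cdot)-P_d\|^2$, with only a logarithmic blow-up, so $\varphi_k=w/(1+w)$ for a smooth nonnegative $w$ that vanishes only at $x_d$, hence $\varphi_k$ is smooth there and has a genuine minimum at $x_d$. I expect this boundary/puncture smoothness argument to be the main obstacle; the remaining three conditions are essentially organisation of the earlier lemmas.

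\emph{Polarity (condition~2)} and \emph{admissibility (condition~4)} both follow from the monotonicity of $\sigma$ and its limits $\sigma(-\infty)=0$, $\sigma(+\infty)=1$, together with $0<\sigma(y)<1$ for finite $y$. Thus $\varphi_k(x)=0$ iff $\phi_k(\Phi(x))=-\infty$ iff $\Phi(x)=P_d$ iff $x=x_d$; since $P_d\in\mathcal{P}$ the point $x_d$ lies in the interior, so $x_d$ is the unique global minimizer with $\varphi_k(x_d)=0$, and Lemma~\ref{lemma:locminfree} (no interior local minimum of $\phi_k$) together with Lemma~\ref{lem:attactive} (no minimizing escape to infinity) — both inherited by $\varphi_k$ through $\sigma$ and $\Phi$ — excludes any competing minimum. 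Symmetrically, $\varphi_k(x)=1$ iff $\phi_k(\Phi(x))=+\infty$, which by the analysis above happens exactly as $x\to\partial\mathcal{W}$ and nowhere on the interior, so $\varphi_k$ attains its maximal value $1$ precisely on $\partial\mathcal{W}$.

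\emph{Critical structure (condition~3).} Because $\sigma'>0$ everywhere, the chain rule shows that a point of $\stackrel{\circ}{\mathcal{W}}\setminus\{x_d\}$ is critical for $\varphi_k$ iff it is critical for $\phi_k\circ\Phi$, and, $\Phi$ being a diffeomorphism, iff its $\Phi$-image is critical for $\phi_k$; hence the critical points of $\varphi_k$ other than $x_d$ are exactly the $\Phi$-preimages of $\mathcal{C}(\phi_k)$. By Proposition~\ref{prop:isolated} these are isolated; by Lemma~\ref{lem:saddlep} each is a saddle, and the property used there — that every punctured neighborhood of the point carries values of the function both above and below the critical value — is invariant under the homeomorphism $\Phi$ and under the strictly increasing $\sigma$, so the pulled-back points are saddles (in particular not local minima) of $\varphi_k$; and by Lemma~\ref{lem:attbasin} each has a basin of attraction of measure zero, the transfer being legitimate because $\Phi$ and $\Phi^{-1}$ are locally Lipschitz and hence map null sets to null sets, and because post-composing with $\sigma$ merely rescales the negated gradient field by the positive scalar $\sigma'$ and so leaves the basins unchanged as sets. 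Conditions~1--4 together show that $\varphi_k$ is a navigation function on $\mathcal{W}$ for every $k>M$.
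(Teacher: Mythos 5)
Your proposal follows the same overall strategy as the paper's proof: verify the four properties of Definition~\ref{def:NFDefinition} by transporting each through the strictly increasing map $\sigma$ and the diffeomorphism $\Phi$ to a property of $\phi_k$ already established in Lemmas~\ref{lemma:locminfree}, \ref{lem:attactive}, \ref{lem:saddlep}, \ref{lem:attbasin} and Proposition~\ref{prop:isolated}, with $k>M$ entering only through Lemma~\ref{lem:attactive}. The differences are in emphasis rather than route. For Property~2 the paper computes the Hessian $\mathcal{H}_{\varphi_k}(x_d)$ explicitly in Eq.~\eqref{eq:destphik2} to exhibit $x_d$ as a nondegenerate minimum; you obtain the same conclusion more informally by writing $\varphi_k = w/(1+w)$ with $w$ smooth and vanishing to second order at $x_d$. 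On Property~1 you reverse the paper's emphasis: you treat boundary smoothness as the delicate point, while the paper dismisses it as a composition of smooth functions. You are right that there is something nontrivial to check---$\Phi$ is defined only on $\stackrel{\circ}{\mathcal{W}}$, so the extension of $\sigma\circ\phi_k\circ\Phi$ to $\partial\mathcal{W}$ is not automatic---but your argument that ``$\sigma$ and all its derivatives decay exponentially at $+\infty$'' combined with ``the divergence rate that the navigation transformation imparts'' gives a $C^\infty$ extension is not actually established by what you write: if $\|\Phi(x)-P_i\|$ vanishes only like a power of $d(x,\partial\mathcal{O}_i)$, then $1-\varphi_k$ behaves like a fractional power of that distance and the extension is not smooth; one needs the super-polynomial blow-up supplied by the specific construction of $\Phi$ (as in \cite{LoizouCDC11}), a hypothesis you gesture at but do not invoke. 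That said, the paper itself is silent on this point, so your proposal is no less rigorous than the original, and the remaining three conditions are handled by essentially the same transfers the paper uses.
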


\begin{proof}  Let us examine the properties of Definition \ref{def:NFDefinition}.
\subsubsection*{Property 1}
The function $\varphi_k(\cdot )$ is smooth since it is a composition of smooth functions.
\subsubsection*{Property 2}
From Lemma \ref{lemma:locminfree} we have that $\phi_k$ is free of local minima. From (\ref{eq:NFcand}), using the chain rule, we have that $\nabla \varphi_k = (\frac{d\sigma}{d\phi} \circ \phi_k \circ \Phi) J^T_\Phi (\nabla \phi \circ \Phi)$, where $J_\Phi$ is the Jacobian of $\Phi$. Since $\Phi$ is a diffeomorphism, $J_\Phi$ is non-singular and since $\sigma$ is strictly increasing, away from the destination $x_d$ the set of critical points in $\mathcal{W}$ and the corresponding point world match,  $\mathcal{C}(\phi_k)  = \Phi(\mathcal{C}(\varphi_k) - \{x_d\}) $. Also at the critical points away from $x_d$ we have that: $\nabla^2 \varphi_{k_{\left| \mathcal{C}_{\phi_k} \right.}} = (\frac{d\sigma}{d\phi} \circ \phi_k \circ \Phi) J^T_\Phi (\nabla^2 \phi \circ \Phi) J_\Phi$. Hence, using the same arguments as in \cite{KodRimNF90},
away from $x_d$ we have $index(\varphi)_ {\left| \mathcal{C}_{\varphi_k} \right.} = index(\phi)_ {\left| \mathcal{C}_{\phi_k} \right.}$. Since $\phi_k$ has no local minima
in $\mathcal{P}$, the same will be true for $\varphi$ in $\mathcal{W}$. Now let us examine $x_d$. 
From Eq. (\ref{eq:NFcand}) and using the notation $h = \Phi(x)$ we have that:

\[ 
\varphi_k(h) = \frac{\left\|h-P_d\right\|^2}{\left\|h-P_d\right\|^2 + \prod \limits _ {i=1} ^{M} \left\|h-P_i\right\|^{\frac{2}{k}} }
\]

\noindent Hence $\nabla \varphi_k(h) =\frac{2(h-P_d)D -\left\|h-P_d\right\|^2 \nabla D}{D^2}$ where $D = \left\|h-P_d\right\|^2 + \prod \limits_{i=1} ^{M} \left\|h-P_i\right\|^{\frac{2}{k}}$. The Hessian of $\varphi_k$ at $x_d$ is thus

\begin{equation}\label{eq:destphik2} 
\mathcal{H}_{\varphi_k}(x_d) =   \frac{2}{\prod \limits _ {i=1} ^{M} \left\|P_d-P_i\right\|^{\frac{4}{k}} }  I .
\end{equation}

\noindent The destination $x_d$ is thus the unique minimum of $\varphi_k$ in $\mathcal{W}$.
\subsubsection*{Property 3}
By Proposition \ref{prop:isolated} the critical points of $\phi_k$ are isolated. As discussed in the previous,  $\mathcal{C}(\phi_k)  = \Phi(\mathcal{C}(\varphi_k) - \{x_d\} )$. Hence $\varphi_k$ has no additional critical points except $x_d$, which was shown 
to be a non-degenerate global minimum. Hence all critical points of $\varphi_k$ are isolated. By Lemma~\ref{lem:saddlep} the isolated critical points except $x_d$ are  saddle points and by Lemma~\ref{lem:attbasin} the basin of attraction of these critical points has zero measure. Since $\Phi(\cdot)$ is a diffeomorphism, the above properties will also hold for $\varphi_k(x)$.
\subsubsection*{Property 4}
By construction $\lim \limits_{h\rightarrow P_i} \phi_k(h) \!=\! \infty$~for~$i = 1\ldots M$. Moreover, we have the requirement that $\lim\limits_{x \rightarrow \mathcal{O}_0}   \left\| \Phi (x) \right\| = +\infty$ which in combination with Lemma \ref{lem:attactive} implies that 

\begin{equation}\label{eq:inftypP} \lim \limits _{x\rightarrow \partial \mathcal{W}} \phi_k \circ \Phi (x) = +\infty.
\end{equation} 

\noindent From Eq. (\ref{eq:switch}) we have that 

\begin{equation}
\label{eq:s1}
\lim \limits _{x\rightarrow \infty} \sigma(x) = 1 .
\end{equation}

\noindent Combining eqs. \eqref{eq:inftypP} and \eqref{eq:s1}) we obtain 
\[ 
\lim \limits _{x\rightarrow \partial \mathcal{W}} \varphi = 1
\]

\noindent which implies the admissibility property.
\end{proof}

\section{Navigation Function Controller Design}
\label{sec:CtrlDesign}

\subsection{Setup}
In the previous section we have shown that $\varphi_{k}$ where $k \ge M+1$ is a Navigation Function where $M$ is the number of workspace obstacles. As the robot moves in the environment, the number $M$ of obstacles encountered is increasing. Let $d$ be the detection radius of the robot. Any obstacle entering this radius is registered as a new obstacle and the number of obstacles $M$ increases when fist detected. Let $T = \left\{T_1,\ldots ,T_N \right\}$ be the time instants where new obstacles are discovered during the navigation of the robot and assume that $N$ is the total number of obstacles that are discovered during a navigation instance. Let $n(t)$ denote the number of obstacles that were discovered up to the time instant $t$ and define: 
\begin{equation}
    \label{eq:phiseq}
 \Theta(x,t) \triangleq  
\varphi_{k(t)}(x)  
\end{equation}
where $k(t)>n(t)$ an integer\footnote{It is sufficient to set $k(t)=n(t)+1$}.  Eq. (\ref{eq:phiseq}), is a continuous (smooth) function  in the state but discontinuous in time, representing in every time interval $T_{n(t)} \le t <T_{n(t)+1}$ the Navigation Function that is applied to the discovered environment at time $t$.

To ensure that the Navigation Transformation $\Phi$ is well defined at any time instant, we will have the following:
\begin{assumption}
\label{as:objknown}
Sensed  obstacles' shapes and orientations are known.
\end{assumption}

In addition we need to have an obstacle "memory" in our system as follows:
\begin{assumption}
\label{as:objpers}
Obstacles become part of the known workspace after being sensed.
\end{assumption}

Considering the definition of the symmetric sensing sector $S_{r,\theta}$, it can be shown that the minimum distance that a robot can approach to an obstacle before detecting it, is given by:  
\begin{equation}
\label{eq:dmin}    
d_{\min}(\theta) = \left\{ \begin{array}{ll}
 \min \left\{ r\sin\frac{\theta}{2},\frac{\rho_{\min}}{\cos\frac{\theta}{2}} \right\}, & \theta<\pi \\ 
 r, &  \pi \le \theta \le 2\pi
\end{array}
\right.
\end{equation}
where $\rho_{\min}$ is the minimum radius of curvature of the workspace obstacles. 

The following assumption is about knowledge of the starting neighborhood:
\begin{assumption}
\label{as:localobs}
For a symmetric sensing sector $S_{r,\theta}$, any obstacles that are within a radius of $d_{\min}(\theta)$  of the initial robot configuration, are assumed known.
\end{assumption}

\subsection{Kinematic Controller Design}
Let us now consider the first-order kinematic system model of Eq. (\ref{eq:holint}). Our motivation in setting up the control law for this system, is to  create a vector field that provides kinetic energy to the system that matches (or is proportional to) the level of the Navigation Function.  We have the following result:

\begin{proposition}
\label{prop:kinctrl}
System (\ref{eq:holint}) with a symmetric sensing sector $S_{r,\theta}$ under the control law: 
\begin{equation}
    \label{eq:kinctrl}
    u = - K\sqrt{2\Theta} \cdot \widehat{ \nabla \Theta}
\end{equation}
with $K>0$ a tuning parameter, is globally asymptotically stable almost everywhere as long as $r >  0 $ and $\theta >  0$
\end{proposition}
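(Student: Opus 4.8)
The plan is to use the (time-varying) navigation function itself as a Lyapunov function and to exploit the fact that only finitely many obstacle-discovery events occur. Put $V(x,t) = \Theta(x,t)$. On any interval $T_{n(t)} \le t < T_{n(t)+1}$ the integer $k(t)$ and the set of registered obstacles are frozen, so by Theorem~\ref{prop:isNF} (applied with $M = n(t) < k(t) = n(t)+1$) the map $\Theta(\cdot,t) = \varphi_{k(t)}(\cdot)$ is a genuine navigation function on the currently known workspace and $\partial\Theta/\partial t = 0$ there. Differentiating along \eqref{eq:holint} with the control \eqref{eq:kinctrl},
\[
\dot V \;=\; \nabla\Theta \cdot u \;=\; -K\sqrt{2\Theta}\,\frac{\nabla\Theta\cdot\nabla\Theta}{\left\|\nabla\Theta\right\|} \;=\; -K\sqrt{2\Theta}\,\left\|\nabla\Theta\right\| \;\le\; 0,
\]
with strict inequality whenever $\Theta \neq 0$ and $\nabla\Theta \neq 0$. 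Hence $\Theta$ is non-increasing along trajectories on each such interval, the robot stays in the sublevel set $\{\Theta \le \Theta(x(T_{n(t)}))\}$, and since $\Theta \to 1$ on the boundary of the known workspace (Property~4 of Theorem~\ref{prop:isNF}) this set is a compact subset of the known free space, so no collision with an already registered obstacle can occur. Moreover $0 \le \Theta \le 1$ gives $\left\|u\right\| \le K\sqrt2$, ruling out finite escape time; and $\left\|u\right\| = K\sqrt{2\Theta} > 0$ for every $x \neq x_d$, which is precisely the claimed non-vanishing of the field near saddle points, while near $x_d$ the removable singularity of $\sqrt{2\Theta}\,\widehat{\nabla\Theta}$ (using \eqref{eq:destphik2}) makes $u$ reduce to a linear, asymptotically stabilizing feedback.

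Next I would stitch across the discrete events. When obstacle $n$ is first detected, $d_{\min}(\theta) > 0$ by \eqref{eq:dmin} (this is where $r>0$ and $\theta>0$ enter), so at that instant the robot is at strictly positive distance from the new obstacle; thus $x(T_n)$ lies in the interior of the updated workspace, the navigation transformation remains well defined (Assumptions~\ref{as:objknown} and~\ref{as:objpers}), and $\Theta(x(T_n))$ changes by a finite — possibly positive — amount. Assumption~\ref{as:localobs} handles the obstacles present in the initial neighbourhood. Consequently, along the entire trajectory $\Theta$ is piecewise non-increasing with at most $N$ finite upward jumps, collision is avoided throughout, and for $t \ge T_N$ no further switching occurs, so $\Theta = \varphi_{k(T_N)}$ is a fixed navigation function for the full discovered environment; then $V$ is monotone non-increasing and bounded below, hence convergent.

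For the final step I would run a standard invariance argument (LaSalle, or Barbalat applied to the integrated decrease of $V$ together with the bounded velocity) on the autonomous system for $t \ge T_N$: the trajectory approaches the critical set of $\varphi_{k(T_N)}$, which is finite and isolated by Proposition~\ref{prop:isolated}. By Lemma~\ref{lem:saddlep} every critical point other than $x_d$ is a saddle, and by Lemma~\ref{lem:attbasin} its stable manifold has zero Lebesgue measure. The set of initial conditions whose trajectory ever meets one of these stable manifolds is a finite union of preimages of measure-zero sets under the switched flow, which is diffeomorphic on each epoch away from the critical points, hence itself of measure zero. Therefore from every initial position outside this exceptional set of measure zero, $\Theta(x(t)) \to 0$, i.e. $x(t) \to x_d$, which is almost-everywhere global asymptotic stability.

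The step I expect to be the main obstacle is not the Lyapunov decrease, which is a one-line computation, but the rigorous stitching across discovery events: bounding the finitely many upward jumps of $\Theta$, verifying that the robot is always in the interior of the current workspace so that the updated $\varphi_k$ and $\Phi$ are defined, and confirming that the union over the finitely many epochs of the pulled-back saddle stable manifolds stays of measure zero. A secondary technical point is the well-posedness of solutions for the discontinuous normalized-gradient field at the isolated saddle points, which is resolved by restricting attention to the complement of that measure-zero set.
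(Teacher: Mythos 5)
Your proof takes essentially the same route as the paper: $V=\Theta$ is the Lyapunov function, $\dot V=-K\sqrt{2\Theta}\,\|\nabla\Theta\|\le 0$, and LaSalle's invariance principle combined with the zero-measure saddle basins (Proposition~\ref{prop:isolated}, Lemmas~\ref{lem:saddlep} and~\ref{lem:attbasin}) yields almost-everywhere global asymptotic stability. If anything your treatment of the local-sensing case is more explicit than the paper's, which dispatches the obstacle-discovery discontinuities in one line via Carath\'eodory solutions and an essential-supremum bound, whereas you spell out the finitely many upward jumps of $\Theta$, the role of $d_{\min}(\theta)>0$ in keeping the robot in the interior of the updated workspace, and the measure-zero preimage argument across epochs.
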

\begin{proof}\\
Case 1: Global Sensing, i.e. $r\rightarrow \infty$ and $\theta = 2\pi$:\\
For the case of global sensing in an environment with $M$ obstacles we have that   $n(t) = M, \forall t$  and $\Theta(x,t)=\varphi_k(x)$, with $k\ge M+1$, is a Navigation Function by construction. Choosing  $V = \Theta$ as a Lyapunov function candidate we have that: 
\[\dot V = -K \sqrt{2\Theta} \left\| \nabla \Theta(x,t) \right\| \le 0 \]
The set where $\dot V = 0$, consists only of the critical points and the destination configuration (global minimum). Since by Proposition \ref{prop:isNF}, function $\Theta$ is a Navigation Function, it has no local minima and its critical points are isolated saddle points with attractive basins that are sets of measure zero. Hence by LaSalle's Invariance Principle the system will converge to the largest invariant set that includes the destination configuration and the saddle points that are the $\omega$ limit set of a set of measure zero of initial conditions. Hence $\dot V \overset{a.e.}{<} 0 $ and the control law of Eq. (\ref{eq:kinctrl}) is globally asymptotically stable, almost everywhere.

Case 2: Local sensing i.e. $r$ is finite and $0< \theta \le 2\pi $:\\
In this case, the robot senses an obstacle's existence before the minimum distance to the obstacle becomes $d_{\min}(\theta)$. 

Let $t=T_{\nu}$ be the time instant that the robot discovered the $\nu$'th obstacle. At this time instant the control law (\ref{eq:kinctrl}) will have a discontinuity in time. This discontinuity will be an isolated event due to Assumption \ref{as:objpers}. Hence the trajectories of System (\ref{eq:holint}) under control law (\ref{eq:kinctrl}),  will flow along Carath\'eodory solutions, and following the same Lyapunov analysis as the previous case, we get: 

\[\underset{t}{\rm{ess}\,\sup } \left\{ -K \sqrt{2\Theta} \left\| \nabla \Theta(x,t) \right\| \right\} \le 0 \]
 
\noindent which implies $\dot V \overset{a.e.}{<} 0 $ and the control law of Eq. (\ref{eq:kinctrl}) is globally asymptotically stable, almost everywhere.
\end{proof}

\subsection{Dynamic Controller Design}
Now let us consider the case of the dynamical System (\ref{eq:dynmodel}). Let \begin{equation}
    \label{eq:totalenergy}
V(t) = \mu \Theta +\frac{1}{2} m \dot x^T \dot x
\end{equation}
be the total energy of the system. In this case our system will be converting potential energy to kinetic energy and vice-versa while dissipating energy through an appropriate dissipation term. We have the following result:
\begin{proposition}
\label{prop:dynctrlglobal}
System (\ref{eq:dynmodel}) with global sensing, 
under the control law: 
\begin{equation}
    \label{eq:dynctrl}
    f =  -\mu \nabla \Theta - \lambda \dot x       
\end{equation}
\noindent with $\mu, \lambda$ positive gains, is globally asymptotically stable almost everywhere.
 \end{proposition}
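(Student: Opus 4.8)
The plan is to mirror the Lyapunov/LaSalle argument of Proposition~\ref{prop:kinctrl}, but now on the full phase space $(x,\dot x)$, using the total energy $V$ of Eq.~\eqref{eq:totalenergy} as the Lyapunov candidate. First I would differentiate $V$ along the closed loop obtained by substituting \eqref{eq:dynctrl} into \eqref{eq:dynmodel}. Using $m\ddot x = f$ and the chain rule, $\dot V = \mu\,\nabla\Theta^{T}\dot x + \dot x^{T} f = \mu\,\nabla\Theta^{T}\dot x + \dot x^{T}\bigl(-\mu\nabla\Theta - \lambda\dot x\bigr) = -\lambda\left\|\dot x\right\|^{2}\le 0$, so the gradient terms cancel and only the dissipation survives. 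Note that under global sensing $n(t)\equiv M$, so $\Theta(x,t)=\varphi_{k}(x)$ with $k>M$ is time-independent and, by Theorem~\ref{prop:isNF}, a navigation function; hence there is no $\partial\Theta/\partial t$ contribution and none of the Carath\'eodory-solution bookkeeping of the kinematic case is needed here.

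Second, I would use monotonicity of $V$ to get forward invariance of sublevel sets, and from that both collision avoidance and precompactness. Since $\Theta$ is admissible, $\Theta\to 1$ on $\partial\mathcal{W}$, while $\mu\Theta(x(t))\le V(t)\le V(x_{0},\dot x_{0})$; therefore any initial state with $V(x_{0},\dot x_{0})<\mu$ — in particular any state started from rest, for which $\Theta(x_{0})<1$ in $\stackrel{\circ}{\mathcal{W}}$ — produces a trajectory confined to a compact subset of $\stackrel{\circ}{\mathcal{W}}\times\mathbb{R}^{2}$ that stays bounded away from all obstacles and from $\mathcal{O}_{0}$. This certifies no collision and supplies the compact positively invariant set required by LaSalle's invariance principle. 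Applying LaSalle, the trajectory converges to the largest invariant set contained in $\{\dot V=0\}=\{\dot x=0\}$; on any trajectory lying in that set $\ddot x=0$ as well, hence $f=0$, hence $\nabla\Theta(x)=0$, so that set is $\{(x,0):x\in\mathcal{C}(\varphi_{k})\}$ — the destination equilibrium $(x_{d},0)$ together with the lifted saddle equilibria. At $(x_{d},0)$ the Hessian $\mathcal{H}_{\varphi_{k}}(x_{d})$ of Eq.~\eqref{eq:destphik2} is positive definite, so $V$ is locally positive definite about $(x_{d},0)$ and that equilibrium is (locally, hence in view of the global convergence) asymptotically stable.

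The step I expect to be the main obstacle is showing that the stable sets of the lifted saddle equilibria have Lebesgue measure zero, so that convergence to $(x_{d},0)$ holds for almost every admissible initial state. For a non-degenerate saddle $x_{s}$, the linearization $m\,\ddot\delta = -\mu\,\mathcal{H}_{\varphi_{k}}(x_{s})\,\delta - \lambda\,\dot\delta$ is hyperbolic: each negative eigenvalue $\kappa$ of the Hessian makes the scalar factor $m s^{2}+\lambda s+\mu\kappa$ have a root with positive real part, so the equilibrium has a nontrivial unstable subspace and, by the stable-manifold theorem, a stable manifold of dimension strictly less than $2n$, which is null. For a degenerate saddle — which Proposition~\ref{prop:Morse} shows can occur — hyperbolicity fails, and I would instead invoke a center-stable manifold argument (or reduce to the first-order picture by lifting the zero-measure basin already established for the negated-gradient flow in Lemma~\ref{lem:attbasin} through the level structure of $V$); alternatively one may simply appeal to the established treatment of second-order navigation-function controllers in \cite{KodRimNF90}. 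Since the critical set is finite, the union of these null stable sets is null, and because $\Phi$ is a diffeomorphism it carries null sets to null sets; hence the conclusion transfers from $\mathcal{P}$ to $\mathcal{W}$, and system \eqref{eq:dynmodel} under \eqref{eq:dynctrl} is globally asymptotically stable almost everywhere, which proves Proposition~\ref{prop:dynctrlglobal}.
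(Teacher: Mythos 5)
Your proof follows the same Lyapunov/LaSalle argument as the paper: use the total energy $V$ from Eq.~\eqref{eq:totalenergy} as the Lyapunov candidate, compute $\dot V = -\lambda\left\|\dot x\right\|^{2}\le 0$ (the gradient terms cancel), and identify the LaSalle limit set with the lifted equilibria $(x,0)$ at critical points of $\varphi_{k}$. Where you go beyond the paper is in the final measure-zero step: the paper simply invokes the first-order navigation-function property (Lemma~\ref{lem:attbasin}, zero-measure basins for the \emph{gradient} flow) and asserts the same conclusion for the damped second-order system, whereas you correctly flag that the stable set of the lifted saddle $(x_{s},0)$ is a different object and needs its own argument, which you supply via hyperbolicity of the linearization $m s^{2}+\lambda s+\mu\kappa$ for non-degenerate saddles and a center-stable manifold appeal for the degenerate ones that Proposition~\ref{prop:Morse} shows can occur. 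You also make explicit the sublevel-set invariance / precompactness step needed to apply LaSalle, which the paper leaves implicit. So the approach is the same, but your write-up is more careful on precisely the points the paper elides.
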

 \begin{proof}\\
Using the same reasoning as in the Case 1 of the  proof of Proposition \ref{prop:kinctrl} and choosing $V$  as a Lyapunov function candidate we have that: 
\[\dot V = \mu \nabla^T \Theta \cdot \dot x + \dot x^T \cdot (-\mu \nabla \Theta - \lambda \dot x) = -  \lambda \left\| \dot x \right\|^2 \le 0 \]
Invariant sets where $\dot V = 0$, consists only of the critical points and the destination configuration (global minimum). Since by Proposition \ref{prop:isNF}, function $\Theta$ is a Navigation Function, it has no local minima and its critical points are isolated saddle points with attractive basins that are sets of measure zero. Hence by LaSalle's Invariance Principle the system will converge to the largest invariant set that includes the destination configuration and the saddle points that are the $\omega$ limit set of a set of measure zero of initial conditions. Hence $\dot V \overset{a.e.}{<} 0 $ and the control law of Eq. (\ref{eq:dynctrl}) is globally asymptotically stable, almost everywhere.
\end{proof}
 
 The following result enable us to choose the minimum damping to avoid oscillations at the destination configuration.
 
 \begin{corollary}
 \label{cor:critdamp}
 Control law (\ref{eq:dynctrl}) with 
    \begin{equation}
     \label{eq:critdamp}
 \lambda = \lambda_c = 2\sqrt{2\mu m}\prod \limits _ {i=1} ^{M} \left\|P_d-P_i\right\|^{\frac{-1}{k}} 
  \end{equation} 
 provides critical damping at the neighborhood of the destination configuration.
 \end{corollary}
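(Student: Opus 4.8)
The plan is to linearize the closed-loop system formed by \eqref{eq:dynmodel} and \eqref{eq:dynctrl} about the destination equilibrium $(x,\dot x)=(x_d,0)$ and to read the critical-damping gain directly off the characteristic polynomial of the linearization.

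First I would note that in a neighborhood of $x_d$ the potential $\Theta$ is time-invariant: by Assumption~\ref{as:objpers} the set of known obstacles is frozen once the last obstacle has been discovered, and no new obstacle is sensed while the robot is close to $x_d$, so there $\Theta(x,t)=\varphi_k(x)$ with fixed $k$ and fixed obstacle count $M$. Setting $e=x-x_d$ and using $\nabla\Theta(x_d)=0$ together with the first-order expansion $\nabla\Theta(x)=\mathcal H_{\varphi_k}(x_d)\,e+O(\|e\|^2)$ (taking the navigation transformation to be the identity near $x_d$, so that no extra Jacobian factor enters), the closed loop reduces to the linear second-order system
\[
 m\,\ddot e + \lambda\,\dot e + \mu\,\mathcal H_{\varphi_k}(x_d)\,e = 0 .
\]

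By \eqref{eq:destphik2} the Hessian $\mathcal H_{\varphi_k}(x_d)$ is isotropic --- a positive scalar $\kappa$ times the identity --- so the linearized system decouples into identical scalar oscillators, one per coordinate, each with characteristic polynomial $m s^2+\lambda s+\mu\kappa$. Critical damping is exactly the borderline case in which this polynomial has a repeated real root, i.e. its discriminant vanishes, $\lambda^2=4m\mu\kappa$; substituting the value of $\kappa$ from \eqref{eq:destphik2} and taking the positive square root yields \eqref{eq:critdamp}.

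The only delicate point is the reduction in the second paragraph --- justifying that the governing potential is autonomous in a neighborhood of $x_d$ and that the navigation transformation contributes no anisotropy there; granted that, the rest is the standard critical-damping computation for a decoupled linear system, and one may additionally observe that with $\lambda=\lambda_c$ the origin of the error dynamics stays (locally) exponentially stable with the fastest non-oscillatory approach to $x_d$.
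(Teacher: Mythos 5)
Your approach is the same as the paper's in everything that matters: linearize the closed loop about $(x_d,0)$, use the isotropy of $\mathcal H_{\varphi_k}(x_d)$ to reduce to identical decoupled scalar oscillators $m\ddot e+\lambda\dot e+k_{sp}e=0$, and set the discriminant of $ms^2+\lambda s+k_{sp}$ to zero. The paper packages the last step as $\zeta=\lambda/(2\sqrt{mk_{sp}})=1$, which is exactly the vanishing-discriminant condition, so the two derivations are interchangeable. Your explicit caveat about the navigation transformation contributing no anisotropy near $x_d$ is well placed; the paper's Hessian formula is in fact computed in the $h=\Phi(x)$ coordinate and then labeled $\mathcal H_{\varphi_k}(x_d)$ without an explicit $J_\Phi^TJ_\Phi$ factor, so the same implicit normalization is being used there too.

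One arithmetic point you should double-check: if you substitute $\kappa=2\prod_i\|P_d-P_i\|^{-4/k}$ literally from \eqref{eq:destphik2}, the critical gain comes out as $\lambda_c=2\sqrt{2\mu m}\prod_i\|P_d-P_i\|^{-2/k}$, which does not match the exponent $-1/k$ in \eqref{eq:critdamp}. Recomputing the Hessian directly from $\varphi_k(h)=\|h-P_d\|^2/\bigl(\|h-P_d\|^2+\prod_i\|h-P_i\|^{2/k}\bigr)$ gives $\mathcal H_{\varphi_k}(x_d)=2\prod_i\|P_d-P_i\|^{-2/k}\,I$ (exponent $2/k$, not $4/k$); with that value, $\lambda_c^2=4m\mu\kappa$ yields precisely the $-1/k$ exponent in \eqref{eq:critdamp}, and it also matches the spring constant $k_{sp}=2\mu\prod_i\|P_d-P_i\|^{-2/k}$ used in the paper's proof. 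So the discrepancy is a typo in \eqref{eq:destphik2} rather than an error in your method, but the claim that ``substituting from \eqref{eq:destphik2} yields \eqref{eq:critdamp}'' should be replaced by the recomputed Hessian value.
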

 \begin{proof}
 From Eq. (\ref{eq:destphik2}), the Navigation Function $\mu \Theta$ will behave locally at the destination neighborhood as an elastic potential, i.e. 
 $\mu \Theta \approx \frac{1}{2} k_{sp} x^Tx$, with spring constant: 
  \[k_{sp} = 2\mu\prod \limits _ {i=1} ^{M} \left\|P_d-P_i\right\|^{\frac{-2}{k}}  
  \]
 Since $\lambda$ acts as the damping coefficient for system (\ref{eq:dynmodel}) under control law (\ref{eq:dynctrl}), the damping ratio of the second order system will be: 
 \[\zeta = \frac{\lambda}{2\sqrt{m k_{sp}}}.
  \]
Setting $\zeta = 1$ for critical damping, we get the result in Eq. (\ref{eq:critdamp})
 \end{proof}

We can now state the following result in the case of local sensing:
\begin{proposition}
\label{prop:dynctrllocal}
System (\ref{eq:dynmodel}), with a symmetric sensing sector $S_{r,\theta}$ under the control law: 
\begin{equation}
    \label{eq:dynctrllocal}
    f =  -\mu \nabla \Theta - \lambda(t) \dot x       
\end{equation}
with
  \begin{equation}
\label{eq:dissipationfun}
\lambda (t) = \left\{ \begin{array}{ll}
 2\sqrt{2\mu m}\prod \limits _ {i=1} ^{n(t)} \left\|P_d-P_i\right\|^{\frac{-1}{k}} , & V < \mu \\ 
 \frac{m}{d_{\min(\theta)}}\left\| \dot x(T_{n(t)}) \right\| , & V \ge \mu
\end{array}
\right.
 \end{equation}
 and $\mu>0, r >0$ and $0< \theta \le 2\pi$ ensures collision avoidance everywhere, critical damping at the neighborhood of the destination configuration  and, almost everywhere global asymptotic stability. 
\end{proposition}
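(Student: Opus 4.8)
The plan is to prove the three assertions---collision avoidance, critical damping near $x_d$, and almost-everywhere global asymptotic stability---by combining the energy bookkeeping of Proposition~\ref{prop:dynctrlglobal} with a braking-distance estimate that exploits the specific choice of $\lambda(t)$ in \eqref{eq:dissipationfun}. First I would record the structure of the closed-loop solutions: $\Theta(\cdot,t)=\varphi_{k(t)}(\cdot)$ is smooth in $x$ and piecewise constant in $t$, changing only at the finitely many discovery instants $T_1<\cdots<T_N$ (Assumption~\ref{as:objpers}), so the right-hand side of \eqref{eq:dynmodel} under \eqref{eq:dynctrllocal} is a Carath\'eodory field with isolated time-discontinuities and classical solutions on each interval $[T_\nu,T_{\nu+1})$. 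Differentiating $V$ along a solution reproduces, for \emph{either} branch of \eqref{eq:dissipationfun}, the computation of Proposition~\ref{prop:dynctrlglobal}: $\dot V=\mu\nabla^{T}\Theta\cdot\dot x+\dot x^{T}\cdot(-\mu\nabla\Theta-\lambda(t)\dot x)=-\lambda(t)\|\dot x\|^{2}\le 0$, so $V$ is non-increasing between discoveries; and since $\Theta\in[0,1]$, at each $T_\nu$ the jump obeys $V(T_\nu^{+})\le\mu+\tfrac12 m\|\dot x(T_\nu)\|^{2}\le\mu+V(T_\nu^{-})$, hence $V(t)\le V(0)+N\mu$ for all $t$.

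Next I would treat collision avoidance. Touching an obstacle or the outer boundary $\mathcal O_0$ forces $\phi_k\circ\Phi\to+\infty$ and therefore $\Theta=1$, so it suffices to keep $\Theta<1$ along the trajectory. Whenever $V(t)<\mu$ one has $\mu\Theta\le V(t)<\mu$, i.e. $\Theta<1$; moreover the critical-damping branch is active there, $\dot V\le 0$, so $V$ remains below $\mu$ until the next discovery and no contact can occur on such an interval. On intervals where $V\ge\mu$ I would argue as follows: by the previous paragraph $V$ can reach $\mu$ from below only at a discovery time $T_\nu$ (with $T_0:=0$ for the initial interval, covered by Assumption~\ref{as:localobs}); at $T_\nu$ the sensor model and \eqref{eq:dmin} place the robot at distance at least $d_{\min}(\theta)$ from the freshly discovered obstacle, while the strong-damping gain equals $\lambda=\tfrac{m}{d_{\min}(\theta)}\|\dot x(T_\nu)\|$, for which the stopping distance of a pure damper is exactly $\tfrac{m}{\lambda}\|\dot x(T_\nu)\|=d_{\min}(\theta)$. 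Since $-\mu\nabla\Theta$ is repulsive in a neighbourhood of every obstacle---there $\nabla\Theta$ points toward the obstacle---it only subtracts from the rate of approach, so the net displacement toward any obstacle during this strong-damping window is strictly less than $d_{\min}(\theta)$; combined with the fact that every previously registered obstacle was likewise first detected at range at least $d_{\min}(\theta)$, the robot never reaches $\Theta=1$. Thus the trajectory stays in the open free workspace and, with the bound on $V$, remains bounded.

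For the critical-damping claim I would apply Corollary~\ref{cor:critdamp}: in a neighbourhood of $x_d$ all obstacles relevant to the approach have been registered, so $\Theta$ coincides with a fixed navigation function; by \eqref{eq:destphik2} the term $\mu\Theta$ behaves there as the elastic potential $\tfrac12 k_{sp}\|x-x_d\|^{2}$ with the spring constant $k_{sp}$ of Corollary~\ref{cor:critdamp}, and near $x_d$ one has $V<\mu$, so the first branch of \eqref{eq:dissipationfun} is active and equals $\lambda_c=2\sqrt{2\mu m}\prod_{i}\|P_d-P_i\|^{-1/k}$, which is precisely the critically-damped value, i.e. damping ratio $\zeta=1$. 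Finally, for stability: for $t\ge T_N$ the system is autonomous with Lyapunov function $V$ and $\dot V=-\lambda(t)\|\dot x\|^{2}\le 0$, where $\lambda(t)$ takes one of two positive constant values according to the sign of $V-\mu$; since $V$ is non-increasing it eventually enters and stays in $\{V<\mu\}$, where the dynamics are smooth and, for a time $t_0$ with $V(t_0)<\mu$, the set $\{V\le V(t_0)\}$ intersected with the free workspace is compact and positively invariant. LaSalle's invariance principle then applies exactly as in Proposition~\ref{prop:dynctrlglobal}: the largest invariant subset of $\{\dot x=0\}$ forces $\nabla\Theta=0$, so it is the critical set of the navigation function $\varphi_{k(T_N)}$ (a navigation function by Theorem~\ref{prop:isNF}, since $k(T_N)>n(T_N)=N$); its only attracting element is $x_d$, and by Lemma~\ref{lem:attbasin} the saddle points have basins of measure zero, giving convergence to $x_d$ from all initial states outside a set of measure zero.

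I expect the main obstacle to be the $V\ge\mu$ case of collision avoidance, namely turning the ``stopping distance $=d_{\min}(\theta)$'' heuristic into a rigorous displacement bound under the superposed force $-\mu\nabla\Theta$, which is guaranteed repulsive only close to each individual obstacle and not globally, and simultaneously ruling out penetration of an already-known obstacle that the robot may currently be near. Handling this cleanly will likely require tracking, for each obstacle, the signed approach coordinate and using that the kinetic energy available for motion toward that obstacle is non-increasing throughout a strong-damping window.
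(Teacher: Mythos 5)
Your proposal follows essentially the same route as the paper's proof: total energy $V$ from \eqref{eq:totalenergy} as the Lyapunov function, $\dot V=-\lambda(t)\|\dot x\|^{2}\le 0$ between discovery instants, the one-dimensional braking-distance calculation giving $\frac{m}{\lambda}\|\dot x(T_\nu)\|=d_{\min}(\theta)$ for the strong-damping branch, Corollary~\ref{cor:critdamp} for the critical-damping claim, and LaSalle on the final interval after the last discovery for almost-everywhere asymptotic stability. You supply two details the paper leaves implicit: the explicit jump bound $V(T_\nu^{+})\le V(T_\nu^{-})+\mu$ coming from $\Theta\in[0,1]$, and the reduction of collision avoidance to keeping $\Theta<1$ (equivalently, staying off $\partial\mathcal{W}$). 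The concern you flag at the end --- whether the braking estimate also rules out penetration of an \emph{already-known} obstacle while $V\ge\mu$, given that $-\mu\nabla\Theta$ is only locally repulsive and the potential barrier $\mu$ is finite --- is a genuine soft spot, but it is one the paper's own proof shares: the paper explicitly restricts the braking analysis to ``the robot heading towards the newly discovered obstacle'' and drops the conservative force ``to simplify our analysis.'' So your proposal matches the paper's approach and its level of rigor, while being more candid about where the argument is heuristic.
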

\begin{proof}
System (\ref{eq:dynmodel}) under control law (\ref{eq:dynctrllocal}) can be considered as a mechanical system with dissipation under a conservative force field. This implies that if the system starts from rest, then collisions are not possible as long as the total energy of the system is bounded below $\mu$. This is a direct result of the admissibility Property of the Navigation Function. 

However, it is possible that the jump in the potential level of $\Theta$, right after a new obstacle is encountered, can cause the total energy to go above $\mu$. In this case we need to ensure that, for the worst case scenario, there is sufficient dissipation in the system to absorb the energy added due to introduction of a new obstacle. 


Now assume that the robot is  heading towards the newly discovered obstacle. To simplify our analysis let us not take into account the repulsive force generated by the navigation function and assume that the robot is moving towards the obstacle with only the dissipation force slowing it down. Then the differential equation describing the motion of the robot is given by $m\ddot x +\lambda \dot x = 0$ where $x$ is the distance across the straight line connecting the robot position $x(T_{n(t)})$ when the obstacle was detected and the closest obstacle point. The solution to the differential equation is 
\begin{equation}
\label{eq:solpos}    
x(t) =\| \dot x(T_{n(t)})\| \frac{m}{\lambda}\left( 1-\exp ^{-\frac{\lambda}{m} t} \right)
\end{equation}
and its time derivative is 
\begin{equation}
\label{eq:solvel}    
\dot x(t) =\| \dot x(T_{n(t)})\| \exp ^{-\frac{\lambda}{m} t} 
\end{equation}
Combining equations (\ref{eq:solpos}) and (\ref{eq:solvel}) we get 
\[ x(t) =\frac{m}{\lambda}\left( \dot x(T_{n(t)}) - \dot x(t) \right)\]
Setting the final velocity to zero and requiring that the travelled distance is $d_{\min(\theta)}$, we get requirement on the damping coefficient: 
$ \lambda (t) = 
 \frac{m}{d_{\min(\theta)}}\left\| \dot x(T_{n(t)}) \right\| 
 $, that ensures that there is no collision due to the potential jump caused by new obstacle discovery.

Since there are finite obstacles in the workspace, and since no collision is possible, $\dot V(t) = - \lambda(t) \|\dot x \|^2$  for every time interval $T_{n(t)} \le t <T_{n(t)+1}$, and following the same reasoning as in the proof of Proposition \ref{prop:dynctrlglobal}, we have that $\dot V(t) \overset{a.e.}{<} 0 $ in every time interval. Now after the last workspace obstacle is discovered, the proof follows as  the proof of Proposition \ref{prop:dynctrlglobal} and we recover global asymptotic stability, almost everywhere.
\end{proof}

The following result is useful for determining the mechanical limits of the robot actuation system:

\begin{corollary}
\label{cor:boundedvel}
For system (\ref{eq:dynmodel}) under control laws (\ref{eq:dynctrl}) or (\ref{eq:dynctrllocal}), it holds that: 
\[\left\| \dot x \right\| < \sqrt{\frac{2\mu}{m}} \]
\end{corollary}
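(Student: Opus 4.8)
The plan is to exploit the energy function $V(t) = \mu\Theta + \tfrac{1}{2}m\dot x^T\dot x$ introduced in Eq.~\eqref{eq:totalenergy}, together with the fact established in the proofs of Propositions~\ref{prop:dynctrlglobal} and~\ref{prop:dynctrllocal} that $V$ is non-increasing along trajectories in every time interval between obstacle discoveries. First I would recall the range of the navigation function: since $\Theta(x,t) = \varphi_{k(t)}(x)$ takes values in $[0,1]$ by Definition~\ref{def:NFDefinition} (and by Theorem~\ref{prop:isNF}), we have $\mu\Theta \ge 0$ everywhere, so that $\tfrac{1}{2}m\|\dot x\|^2 \le V(t)$ at all times. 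Thus a uniform bound $V(t) < \mu$ would immediately give $\|\dot x\|^2 < 2\mu/m$, i.e. the claimed inequality.

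The core of the argument is therefore to show $V(t) < \mu$ for all $t$. I would assume the robot starts from rest, so that $V(0) = \mu\Theta(x_0,0) < \mu$, using admissibility (Property 4) since $x_0$ is an interior point and hence $\Theta(x_0,0) < 1$. Between discovery events $V$ only decreases, so the only way $V$ could reach or exceed $\mu$ is through an upward jump at some discovery instant $T_{n(t)}$, caused by the discontinuity of $\Theta$ in time when a newly sensed obstacle is incorporated (the potential level rises near the new point obstacle). For the global-sensing control law \eqref{eq:dynctrl} there are no such jumps, so $V(t) \le V(0) < \mu$ trivially. For the local-sensing control law \eqref{eq:dynctrllocal}, the design of the dissipation schedule $\lambda(t)$ in Eq.~\eqref{eq:dissipationfun} is precisely what prevents a collision, but I need the slightly stronger conclusion that $V$ stays below $\mu$. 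Here I would invoke the reasoning already in the proof of Proposition~\ref{prop:dynctrllocal}: the high-dissipation branch ($V \ge \mu$) is engineered so that the robot is brought to rest before traversing the detection margin $d_{\min}(\theta)$, hence it never reaches an obstacle; combined with admissibility this forces $\Theta < 1$ along the trajectory, and once the robot is again at rest the energy has been dissipated back below $\mu$, after which the low-dissipation branch takes over and $V$ resumes decreasing. So $V(t) < \mu$ is maintained.

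The main obstacle I anticipate is handling the discovery-instant jumps rigorously: one must argue that the momentary increase in $V$ at $T_{n(t)}$ does not push $\|\dot x\|$ itself up (only the potential term jumps, while $\dot x$ is continuous through the event since the force stays bounded), and that the subsequent high-dissipation phase restores $V < \mu$ before the next event — this relies on the finiteness of the obstacle set $N$ and on the isolation of discovery instants guaranteed by Assumption~\ref{as:objpers}. Once that bookkeeping is in place, the inequality $\tfrac{1}{2}m\|\dot x\|^2 \le V(t) < \mu$ closes the proof. I would also note the inequality is strict because equality $\|\dot x\|^2 = 2\mu/m$ would require $\Theta = 0$ and $V = \mu$ simultaneously, which is impossible since $\Theta = 0$ only at the destination where $V$ would then have to have been $\mu$ initially, contradicting the start-from-rest interior initialization.
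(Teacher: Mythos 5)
Your approach is the same as the paper's in spirit: bound $\tfrac{1}{2}m\|\dot x\|^2$ by the energy $V=\mu\Theta+\tfrac12 m\dot x^T\dot x$, use $\Theta\in[0,1]$ to bound the potential term by $\mu$, and invoke the dissipative nature of the closed loop. The paper's own proof is a single sentence: it treats the closed loop as ``a mechanical system with dissipation under a conservative force field,'' says the worst case is no dissipation starting from the highest potential level, and reads the bound off Eq.~\eqref{eq:totalenergy}. It does not work through the jump instants at all, and explicitly flags the bound as conservative.

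Where your plan diverges — and where it runs into trouble — is the intermediate claim that $V(t)<\mu$ for all $t$. That claim is contradicted by the paper's own Proposition~\ref{prop:dynctrllocal}, whose proof states outright that ``the jump in the potential level of $\Theta$, right after a new obstacle is encountered, can cause the total energy to go above $\mu$,'' and whose dissipation schedule \eqref{eq:dissipationfun} has a dedicated branch for the regime $V\ge\mu$. You acknowledge this jump yourself, but then conclude ``$V(t)<\mu$ is maintained,'' which is internally inconsistent: during the high-dissipation phase $V\ge\mu$ by construction, so the chain $\tfrac12 m\|\dot x\|^2\le V(t)<\mu$ cannot be used there. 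The observation you make in passing — that $\dot x$, and hence the kinetic energy $E_k=\tfrac12 m\|\dot x\|^2$, is continuous across a discovery instant while only the potential term jumps — is exactly the right ingredient, but it should be applied to $E_k$ directly, not routed through a false bound on $V$. Concretely: at $T^-$ one has $E_k(T^-)=V(T^-)-\mu\Theta(x(T),T^-)<\mu$; continuity gives $E_k(T^+)=E_k(T^-)<\mu$; and during the ensuing high-dissipation phase one must then argue (as the paper does in Proposition~\ref{prop:dynctrllocal}, by neglecting the navigation force and solving $m\ddot x+\lambda\dot x=0$) that the speed only decays until the robot comes to rest, so $E_k$ stays below its pre-jump value. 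Only after that does the low-dissipation branch resume with $V<\mu$ and your original chain apply. Your write-up has the right pieces but assembles them around the wrong invariant; switching the invariant from $V<\mu$ to $E_k<\mu$ across the discovery transients would close the gap and, incidentally, make the argument more careful than the paper's own terse justification.
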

\begin{proof}
Since system (\ref{eq:dynmodel}) under control laws (\ref{eq:dynctrl}) or (\ref{eq:dynctrllocal}) represents a mechanical system with dissipation under a conservative force field the maximum velocity will occur at the lower potential level in the case that there is no dissipation and for a system that has started at the highest potential level. Due to the conservation of energy and from Eq. (\ref{eq:totalenergy}) we get the result. Please note that this represents a conservative bound.
\end{proof}

\section{Simulation Results}
\label{sec:sims}
The effectiveness of the methodology was verified through a set of non-trivial simulations. A symmetric sensing sector with $\theta=60^o$ and $r=1m$ was used. The Navigation Transformation was built based on \cite{NavTranTRO17} and \cite{ConLoizou20}. 

In the first simulation study, the kinematic controller (\ref{eq:kinctrl}) was applied to the system (\ref{eq:holint}). The robot started at its initial position in a "seemingly" empty workspace. As the robot moved, new obstacles were discovered through the  symmetric sensing sector as shown in Fig. \ref{fig:KinSim}, starting from $O_1$ up to $O_6$. As can be seen the proposed controller performed successfully, maintaining the robot in the workspace, handling the occurrence of new obstacles, avoiding collisions and stabilizing at the destination configuration.  
\begin{figure}[htp]
    \centering
    \includegraphics[trim=4cm 9cm 1cm 8.3cm, clip, scale=.61]{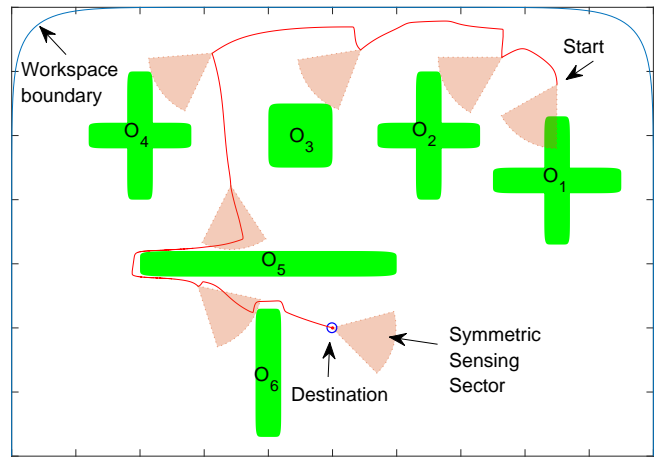}
    \caption{[Kinematic Controller] Trajectory of system (\ref{eq:holint}) under control law (\ref{eq:kinctrl}). Symmetric sensing sector shown at destination and every instance a new obstacle is discovered.}
    \label{fig:KinSim}
\end{figure}

In the second simulation study, the dynamic controller (\ref{eq:dynctrllocal}) was applied to the system (\ref{eq:dynmodel}). The mass of the system was set as $m=1Kg$. The controller parameters were chosen as $mu=10$, $\rho_{\min}=0.05m$ whereas the dissipation parameter $\lambda(t)$ was chosen as per Eq. (\ref{eq:dissipationfun}).
The robot started at the same position as in the first simulation study with the same workspace. The robot moved along the trajectory as shown in Fig. \ref{fig:DynSim}, sequentially discovering obstacles $O_1$ up to $O_6$. The robot trajectory in this case follows a similar - but not identical - path as the kinematic system. We can observe limited oscillatory behavior as the robot approaches $O_5$ and $O_6$. However, as predicted by our analysis, the robot demonstrated critical damping without oscillations at the destination configuration. The maximum velocity of the robot in this case study was $\left\| \dot x \right\|_{\max} = 0.93m/s$, much lower than the upper bound of $4.47m/s$ provided by Corollary \ref{cor:boundedvel}. As can be seen the proposed controller performed successfully, maintaining the robot in the workspace, handling the appearance of new obstacles, avoiding collisions and stabilizing without oscillations at the destination configuration.  
\begin{figure}[htp]
    \centering
    \includegraphics[trim=4cm 9cm 1cm 8.3cm, clip, scale=.61]{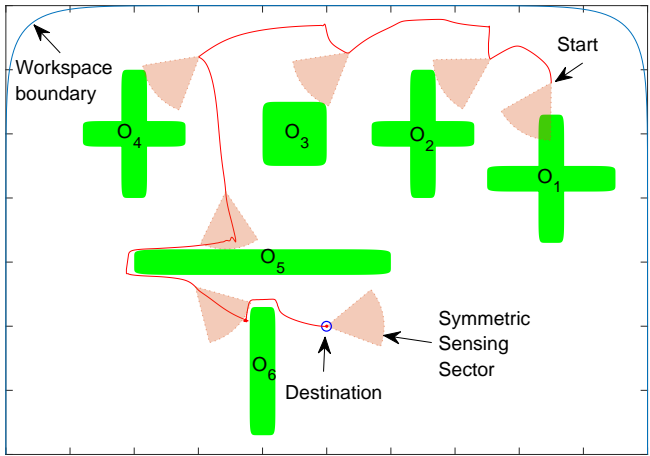}
    \caption{[Dynamic Controller] Trajectory of system (\ref{eq:dynmodel}) under control law (\ref{eq:dynctrllocal}). Symmetric sensing sector shown at destination and every instance a new obstacle is discovered.}
    \label{fig:DynSim}
\end{figure}

\section{Conclusions}
\label{sec:conclude}
This paper presents a correct-by-construction methodology to create Navigation Functions, without the need for tuning. The methodology builds on the concepts of Navigation Transformation \cite{NavTranTRO17}, Harmonic Function Based Navigation Functions \cite{LoizouCDC11} and of course the Navigation Functions \cite{KodRimNF90}. Without the need for tuning, on-the-fly addition of new obstacles to the workspace is possible, while maintaining the convergence and stability properties. Considering issues from sensor based robot navigation, the concept of symmetric sensing sector is introduced, modelling typical robot sensors like the laser scanner. It is shown that navigation is possible for any non-zero symmetric sensing sector. A kinematic controller that creates a vector field with kinetic energy that matches the level of the Navigation Function is proposed, in order to avoid the slowdowns close to the saddle points. The proposed dynamic controller, in addition to the stability and collision avoidance guarantees, provides critical damping at the destination configuration and bounds on the maximum velocity of the system. In addition to the analytical guarantees, simulation results verify the performance of the system.

Future work includes extending the methodology to handle moving obstacles, obstacle and workspace mapping,  multi-agent scenarios and 3D navigation.

\addtolength{\textheight}{-12cm}   




\section*{ACKNOWLEDGMENT}
The first author would like to acknowledge the contribution of European
Union's Horizon 2020 research and innovation program
under grant agreements 824990 (RIMA), 767642 (L4MS)
and by the Cyprus Research and Innovation Foundation grants
EXCELLENCE/1216/0365
(HOD-ICCCS)
and
EXCELLENCE/1216/0296 (RETuNE).

 \bibliography{Bibliography}
\bibliographystyle{plain}

\end{document}